\pdfoutput=1

\documentclass{article} 
\usepackage{iclr2025_conference,times}

\usepackage{amsmath,amsfonts,bm}

\def\eqref#1{equation~\ref{#1}}

\def\1{\bm{1}}

\DeclareMathAlphabet{\mathsfit}{\encodingdefault}{\sfdefault}{m}{sl}
\SetMathAlphabet{\mathsfit}{bold}{\encodingdefault}{\sfdefault}{bx}{n}

\usepackage{hyperref}
\usepackage{url}
\usepackage{algorithm}
\usepackage{algpseudocode}

\usepackage{booktabs}
\usepackage{graphicx}
\usepackage{xcolor}
\definecolor{seabornbg}{HTML}{F0F0F0}
\usepackage{enumitem}
\usepackage[most]{tcolorbox} 

\newtcolorbox{mybox}[1]{
  enhanced,
  breakable,
  colback=blue!5!white,
  colframe=blue!75!black,
  fonttitle=\mdseries,
  title=#1,
  boxrule=0.5pt,
  arc=3pt
}
\usepackage{xspace}

\usepackage{times}
\usepackage{latexsym}
\usepackage{cancel}

\usepackage[T1]{fontenc}

\usepackage[utf8]{inputenc}
\usepackage{listings}

\usepackage{microtype}
\usepackage{enumitem}

\usepackage{xcolor}
\definecolor{thedarkblue}{RGB}{0,0,120} 
\definecolor{mydarkblue}{rgb}{0,0.08,0.45} 
\definecolor{darkblue}{rgb}{0,0.08,180}
\colorlet{TufteRed}{red!80!black}
\definecolor{theblue}{RGB}{0,0,180}
\colorlet{thered}{TufteRed}
      
\usepackage{microtype}
\usepackage{balance}

\usepackage{booktabs}
\usepackage{tabularx}

\usepackage{amsmath,amssymb,amsthm}

\newcommand{\eat}[1]{\ignorespaces}
\usepackage{comment}

\newcommand{\journal}[1]{} 

\usepackage{tikz}
\usepackage{verbatim}
\usetikzlibrary{arrows}
\usetikzlibrary{shapes,snakes}
\usetikzlibrary{decorations.pathmorphing} 
\usetikzlibrary{fit}					
\usetikzlibrary{backgrounds}	

\usepackage{ragged2e}
\usepackage{multirow}
\usepackage{microtype}
\usepackage{balance}
\usepackage{setspace}

\graphicspath{{./}{./graphics/}}
\newcolumntype{H}{>{\setbox0=\hbox\bgroup}c<{\egroup}@{}}

\newcolumntype{R}[1]{>{\RaggedLeft\arraybackslash}} 
\newcolumntype{L}[1]{>{\RaggedRight\arraybackslash}} 

\AtBeginEnvironment{pmatrix}{\setlength{\arraycolsep}{2pt}}

\DeclareMathOperator{\hugeE}{\mbox{\huge\raise-0.3ex\hbox{E}}}
\DeclareMathOperator{\p}{\mathbb{P}}
\DeclareMathOperator{\hugep}{\mbox{\huge\raise-0.3ex\hbox{$\p$}}}

\usepackage[capitalize,noabbrev]{cleveref}

\usepackage{subcaption}
\usepackage{graphicx}
\usepackage{nicefrac}
\usepackage[most]{tcolorbox}

\DeclareMathAlphabet{\mathbcal}{OMS}{cmsy}{b}{n}
\usepackage{mathrsfs}
\usepackage{comment}
\usepackage{bm}
\usepackage{bbm}
\usepackage{amssymb}
\usepackage{enumitem}
\usepackage{paralist}
\usepackage{adjustbox}

\newtheorem{proposition}{Proposition}
\theoremstyle{remark}
\newtheorem{remark}{Remark}

\definecolor{googleblue}{HTML}{4285F4}
\definecolor{googlered}{HTML}{DB4437}
\definecolor{googlepurple}{HTML}{A142F4} 
\definecolor{googlegreen}{HTML}{0F9D58}

\usepackage[textsize=tiny,textwidth=1in]{todonotes}

\newcommand{\method}{\textsc{GraSPeR}\xspace}
\newcommand{\rebuttal}[1]{\textcolor{blue}{#1}}

\title{Reasoning-Based Personalized Generation for Users with Sparse Data}

\author{Bo Ni$^1$, Branislav Kveton$^2$, Samyadeep Basu$^2$, Subhojyoti Mukherjee$^2$, \\
\textbf{Leyao Wang$^3$, Franck Dernoncourt$^2$, Sungchul Kim$^2$, Seunghyun Yoon$^2$,} \\
\textbf{Zichao Wang$^2$, Ruiyi Zhang$^2$, Puneet Mathur$^2$, Jihyung Kil$^2$,} \\
\textbf{Jiuxiang Gu$^2$, Nedim Lipka$^2$, Yu Wang$^4$, Ryan A. Rossi$^2$, Tyler Derr$^1$} \\
\\
$^1$Vanderbilt University, $^2$Adobe Research, $^3$Yale University, $^4$University of Oregon \\
\texttt{\{bo.ni, tyler.derr\}@vanderbilt.edu} \\
\texttt{\{kveton, samyadeepb, subhomuk, dernonco, sukim, syoon, jackwa, ruizhang,} \\
\texttt{puneetm, jkil, jigu, lipka, ryrossi\}@adobe.com} \\
\texttt{leyao.wang.lw855@yale.edu, yuwang@oregon.com}
}

\lstdefinestyle{llmprompt}{
    basicstyle=\small\ttfamily,
    breaklines=true, 
    frame=single,
    framesep=5pt,
    backgroundcolor=\color{gray!10},
    numbers=left,
    numberstyle=\tiny\color{gray},
    numbersep=8pt,
}

\begin{document}
\iclrfinalcopy
\maketitle
\lhead{}

\begin{abstract}

Large Language Model (LLM) personalization holds great promise for tailoring
responses by leveraging personal context and history. However, real-world users usually possess sparse
interaction histories with limited personal context, such as cold-start users in
social platforms and newly registered customers in online E-commerce platforms,
compromising the LLM-based personalized generation. To address this challenge, we introduce \method (\textbf{Gra}ph-based \textbf{S}parse
\textbf{Pe}rsonalized \textbf{R}easoning), a novel framework for enhancing
personalized text generation under sparse context.
\method first augments user context by predicting items that the user would likely interact with in the future. With reasoning alignment, it then generates texts for these interactions to enrich the augmented context. In the end, it generates personalized outputs conditioned on both the real and synthetic histories, ensuring alignment with user style and preferences. Extensive experiments on three benchmark
personalized generation datasets show that \method achieves
significant performance gain, substantially improving
personalization in sparse user context settings.
\end{abstract}

\begin{figure*}[b!]
   \centering
    \scriptsize 
    \textcolor[HTML]{FF7F0E}{\rule{1.5ex}{1.5ex}} Ours (Graph+Reasoning)
\hspace{0.5em}
\textcolor[HTML]{1F77B4}{\rule{1.5ex}{1.5ex}} PGraph(Graph Only)
\hspace{0.5em}
\textcolor[HTML]{2CA02C}{\rule{1.5ex}{1.5ex}} LaMP(No Graph, No Reasoning) 
\hspace{0.5em}
\textcolor[HTML]{BCBD22}{\rule{1.5ex}{1.5ex}} REST-PG(Reasoning Only)
    \centering
    \begin{subfigure}[t]{0.49\textwidth}
        \centering
        \includegraphics[width=\linewidth]{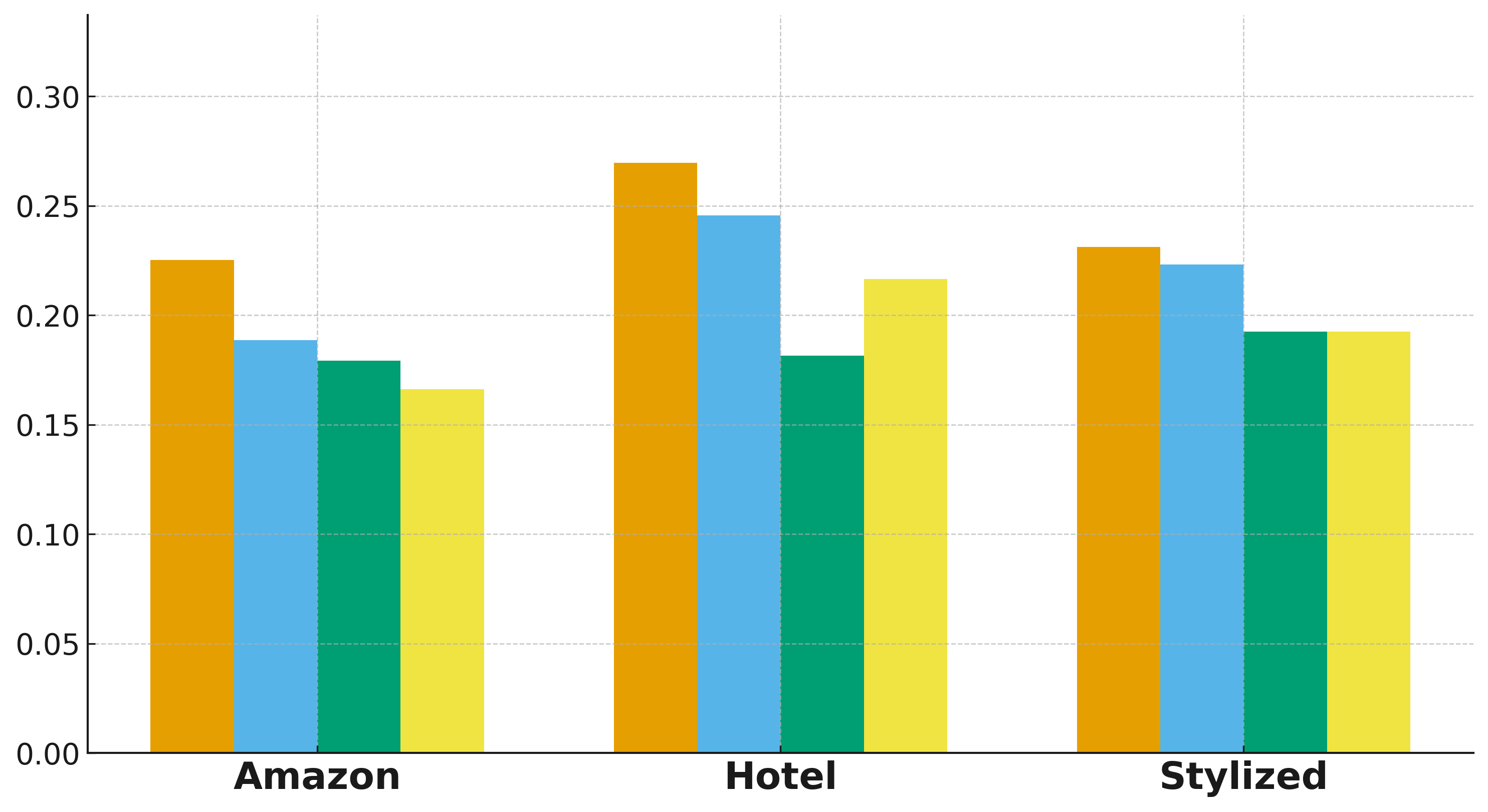}
        \caption{Long Text Generation.}
        \label{fig:long_text_generation}
    \end{subfigure}
    \hfill
    \begin{subfigure}[t]{0.49\textwidth}
        \centering
        \includegraphics[width=\linewidth]{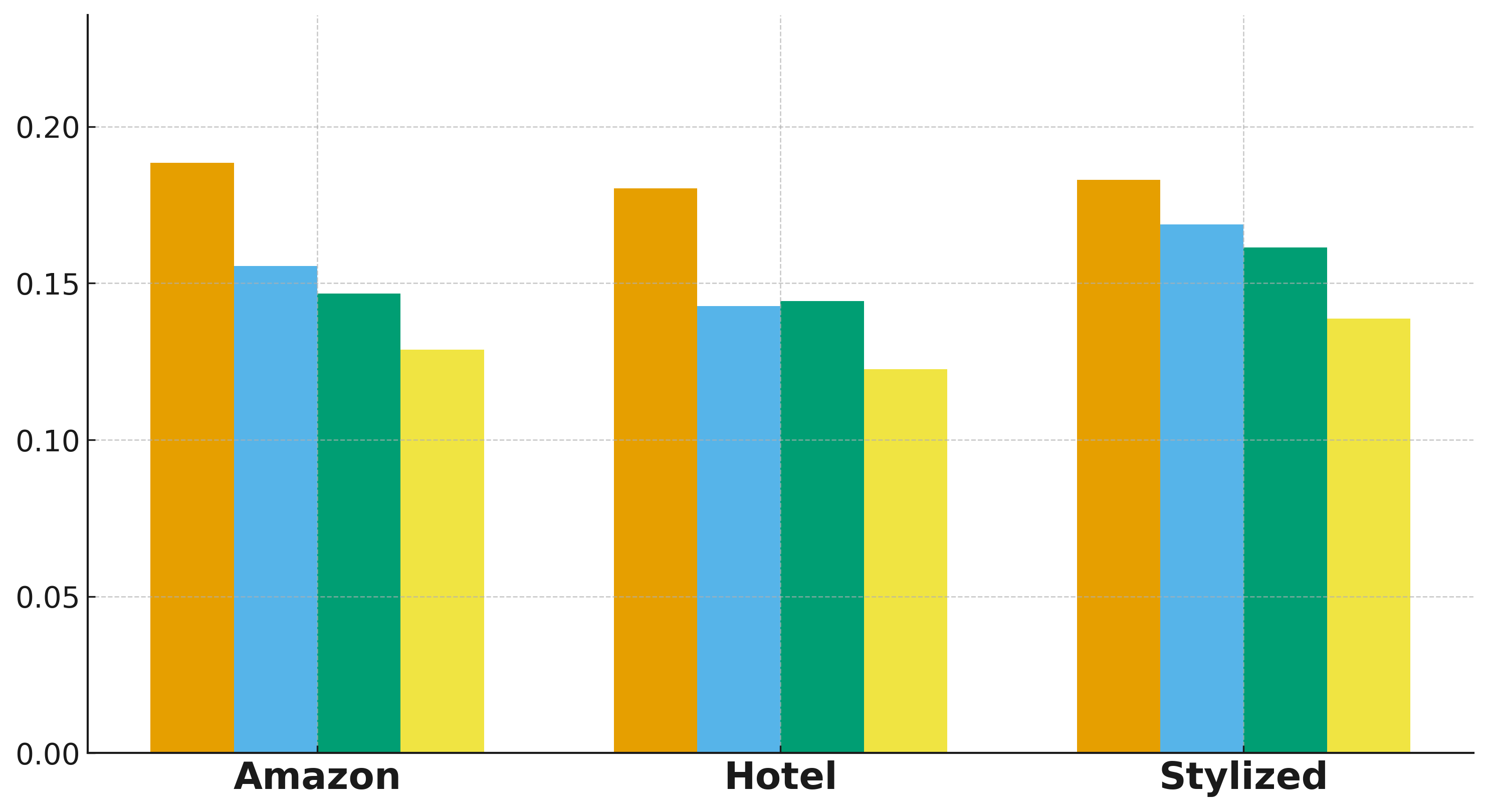}
        \caption{Short Text Generation.}
        \label{fig:short_text_generation}
    \end{subfigure}

    \caption{
    Results comparing our approach across two fundamental tasks and across datasets.
    Legends use descriptive labels: 
    Ours (Graph + Reasoning), PGraph (Graph Only), LaMP (No Graph, No Reasoning), and REST-PG (Reasoning Only). \method achieves over 10\% gains, on average, across datasets.}
    \label{fig:teaser}
\end{figure*}

\section{Introduction}
Personalized Large Language Models (LLMs) have recently garnered significant attention~\citep{salemi2023lamp, tsai-etal-2024-leveraging} due to their various downstream applications in search, recommendation, and conversational agents~\citep{Yoganarasimhan2019, 6648327, SHUMANOV2021106627}. By retrieving relevant personal context from user history, LLMs can produce outputs that are tailored to the given user's personal preferences and enhance overall satisfaction and quality.

The core of LLM personalization lies in retrieving personal context, typically
derived from a user’s history. However, most existing approaches emphasize
textual histories~\citep{salemi2023lamp, salemi2024optimization}. While useful,
these histories are often sparse and limited, which severely constrains
personalization for long-tail users due to insufficient context. For example, in
e-commerce platforms and social networks, more than 95\% 
of users tend to be cold-start~\citep{au_personalized_2025,ni-mcauley-2018-personalized}.
Recent studies show that incorporating auxiliary
product reviews from other users—naturally represented as user-item graphs that
connect users, items, and their interactions—can effectively improve model
performance in settings such as recommendation and text
generation~\citep{au_personalized_2025,wang2025bridgingreviewsparsityrecommendation}.
By leveraging such graph-based histories, language models are able to generate
more accurate and contextually rich personalized responses.

Despite these advancements, how to effectively leverage diverse data sources,
such as graphs, to improve data sparsity in personalized LLMs remains
underexplored. While prior work~\citep{au_personalized_2025} focused on
retrieving existing textual histories for target items, it ignores the potential
of more complex, structural information in the user-item interaction graphs.
Such rich structural data could provide crucial signals for users with otherwise
sparse textual histories, as shown in recent works \citep{wang2025bridgingreviewsparsityrecommendation}. Furthermore, the integration of varied contextual information
often lacks a dedicated reasoning phase before generation. We argue that this
reasoning step is crucial as the diversity and volume of retrieved information
requires the LLM to strategically synthesize and generate coherent and
personalized outputs~\citep{salemi_reasoning-enhanced_2025}.

To address these challenges, we propose \method, a framework with two key stages: 
graph-based augmentation and reasoning-aligned generation. In the augmentation stage, we integrate a pretrained link predictor to enrich the sparse user context by simulating potential future interactions. In the generation stage, the personalized text generation model leverages both the augmented history of the user and existing texts of the target item to craft tailored responses. Crucially, reasoning is explicitly incorporated into both stages: during augmentation, reasoning guides the generation of features for the simulated edges to ensure alignment with user preferences; during text generation, reasoning enforces consistency between the synthesized history and the final personalized response~\citep{salemi_reasoning-enhanced_2025}. Extensive experiments demonstrate that \method substantially outperforms state-of-the-art baselines.

Our contribution can be summarized as follows: 
\noindent

\textbf{(1)} We introduce \method, a novel framework that tackles sparsity in
personalized text generation by combining graph learning and reasoning.  

\textbf{(2)} To the best of our knowledge, this is the first work to explicitly
integrate reasoning into sparse personalization, enabling LLMs to cohesively
synthesize augmented context and generate faithful, user-aligned outputs.  

\textbf{(3)} We conduct extensive experiments on real-world datasets (Amazon~\citep{ni-mcauley-2018-personalized},
Hotel~\citep{kanouchi-etal-2020-may}, and Stylized Feedback~\citep{alhafni-etal-2024-personalized}),
showing that \method achieves over 10\% gains for text generation and 15\% for rating prediction.

\section{Problem Definition} \label{sec:problem_def}
Most existing works on personalized text generation assume that users have
access to rich personal contexts, typically requiring more than ten historical
text entries~\citep{au_personalized_2025, salemi2023lamp}. However, this
assumption does not align with real-world usage patterns, where user data often
follows a long-tail distribution—over 95\% of users have written fewer than two text entries~\citep{ni-mcauley-2018-personalized, au_personalized_2025}.
This data sparsity poses a significant challenge for personalization systems
that rely heavily on individual user histories.

We argue that the naturally occurring bipartite user-item interaction graph, constructed from collective user-item histories, offers a valuable source of auxiliary context~\citep{10.1145/3459637.3482313, Zhao_Liu_Neves_Woodford_Jiang_Shah_2021}. It encodes implicit relationships between users and items through shared interactions and textual feedback, providing a structural foundation to infer user preferences even in low-resource settings.

Thus, to address the sparsity challenge and enrich the user context in such cases, we follow the setting in prior work~\citep{au_personalized_2025} and define the task of \emph{graph-based personalized text generation}. In this task, the goal is to leverage both the sparse personal history of a user and the broader structural context encoded in the user-item graph to generate personalized text (e.g., reviews) for a target item.

\paragraph{Personalized Text Generation with Graph.}
We formally define the task in this section.  
Let $\mathcal{U} = \{u_1, u_2, \ldots, u_n\}$ denote the set of $n$ users, and $\mathcal{I} = \{i_1, i_2, \ldots, i_m\}$ denote the set of $m$ target items (e.g., products, hotels).  
For each user $u \in \mathcal{U}$, we define their interaction history as a sequence
\[
H_u = \big[ (i_{u,1}, t_{u,1}), \dots, (i_{u,k_u}, t_{u,k_u}) \big],
\]
where $k_u$ is the number of interactions of user $u$ (which may vary across users), $i_{u,\ell} \in \mathcal{I}$ is the $\ell$-th item interacted with by $u$, $t_{u,\ell} \in \mathcal{T}$ is the associated text, such as the review written by a customer or a comment written by a social media user, and $\mathcal{T}$ is the space of all possible text.

Following previous work~\citep{au_personalized_2025}, we model the interactions as a bipartite graph $G = (\mathcal{U} \cup \mathcal{I}, \mathcal{E})$, where each edge $(u, i) \in \mathcal{E}$ exists if $(i, t) \in H_u$ for some text $t$ and user $u$.

The goal of personalized text generation is to learn a function: 
\[
f: \mathcal{U} \times \mathcal{H} \times \mathcal{I} \rightarrow \mathcal{T}
\]
that generates a personalized text $t_{u,i^*}$ for a given user $u \in \mathcal{U}$, their history $H_u \in \mathcal{H}$, a new target item $i^* \in \mathcal{I}$ and the bipartite graph $G$:
\[
t_{u,i^*} = f(u, H_u, i^*),
\]
where $\mathcal{H}$ is the space of all possible user histories.

\begin{figure*}[t]
    \centering
    \includegraphics[width=\linewidth]{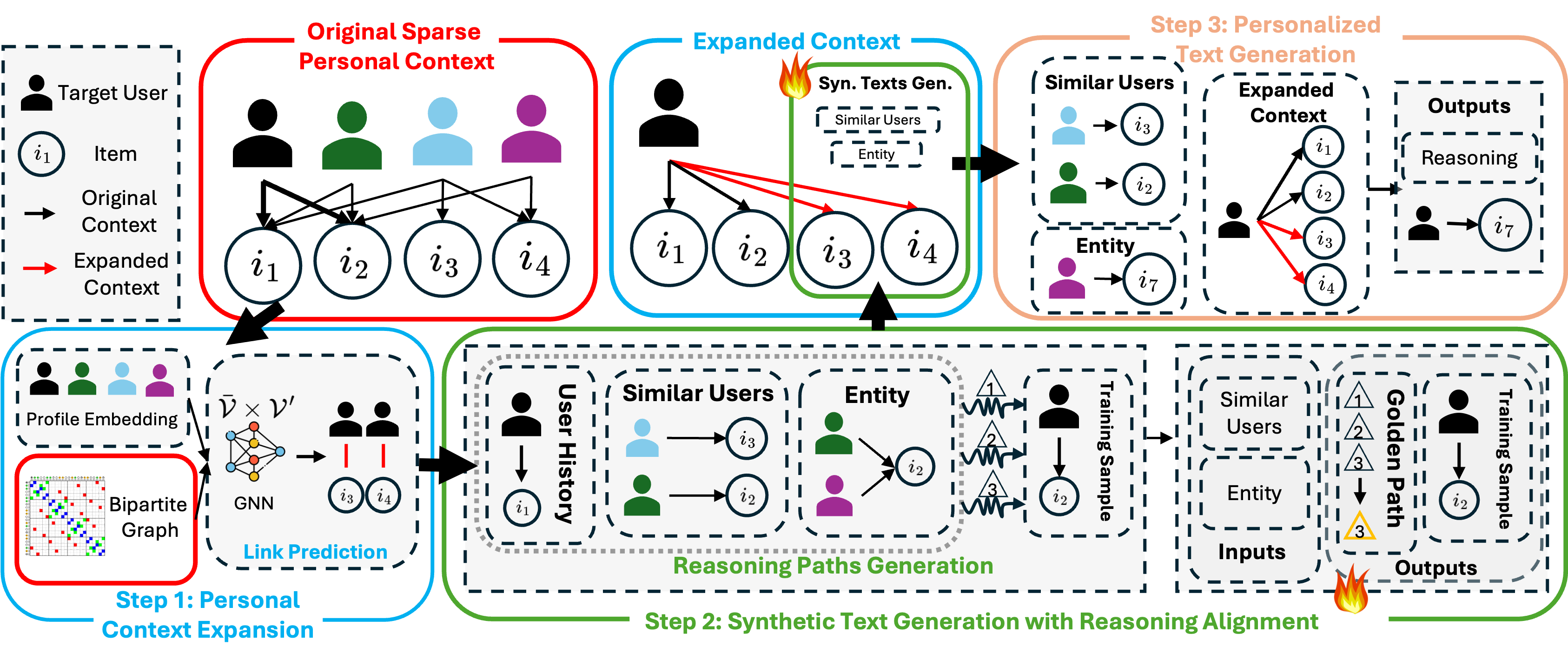}
    \caption{Overview of the proposed \method. 
    In \textbf{Step 1}, the personal context is enriched by leveraging the underlying graph structure to predict potential interactions. 
    In \textbf{Step 2}, we generate synthetic reviews for the predicted interactions with aligned reasoning. 
    In \textbf{Step 3}, the enhanced personal context that contains both the observed and simulated interactions enables the generation of more accurate and personalized text.}
    \label{fig:intro}
    \vspace{-0.1in}
\end{figure*}

\section{Method}
As illustrated in Figure~\ref{fig:intro}, \method operates in three steps to address the challenge of personalization under sparse context: (\textbf{Step 1}) \textit{Personal Context Expansion via Link Prediction}, which augments sparse histories with potential user--item interactions predicted from a graph model; (\textbf{Step 2}) \textit{Synthetic Review Generation with Reasoning Alignment}, which integrates intermediate reasoning paths into the text generation process to ensure personalization alignment; and (\textbf{Step 3}) \textit{Personalized Text Generation}, which produces personalized outputs conditioned on the augmented personal context. 

\subsection{Step 1: Personal Context Expansion}
Personalization often suffers in the long-tail setting where users have limited history, as limited contexts constrain downstream LLM generation quality and further compromise personalization. To mitigate this limitation, we propose enriching sparse histories by predicting additional user–item interactions through graph-based link prediction. These predicted edges allow us to construct a more comprehensive profile for each user, which can then be leveraged in later reasoning alignment and generation stages. To simulate additional user–item interactions,
we perform link prediction over the bipartite user–item interaction graph
$G=(\mathcal{U}\cup\mathcal{I},\mathcal{E})$ defined in Section~\ref{sec:problem_def}.

\paragraph{Training Graph-based Link Predictor.} Each node $v\in\mathcal{U}\cup\mathcal{I}$ is initialized with an embedding $\mathbf{h}_v^{(0)}\in\mathbb{R}^d$ derived from textual features using a pretrained encoder: 
\begin{equation}
    \mathbf{h}_v^{(0)} =
    \begin{cases}
        \text{Enc}\!\left(T_u\right), & v=u\in\mathcal{U},\\[6pt]
        \text{Enc}\!\left(R_i\right), & v=i\in\mathcal{I},
    \end{cases}
\end{equation}
where $\text{Enc}(\cdot)$ denotes a sentence encoder mapping text to a $d$-dimensional embedding, $T_u$ is the concatenated string of the sequence of text in user $u$'s profile $T_u=\text{Concat}[t_{u,1}, ..., t_{u,k_u}]$, and $R_i=\{\,t_{u,i}\mid (u,i)\in E\,\}$ is the set of texts aggregated for item $i$. $\text{Enc}$ can be text embedding model~\citep{reimers2019sentencebertsentenceembeddingsusing}. Since style is a crucial element for personalization compared to content itself~\citep{zhang2025personalizationlargelanguagemodels}, we instantiate Enc as a style-aware encoder that disentangles stylistic elements from content following~\cite{wegmann-etal-2022-author}.

With the initialized node features, we adopt GraphSAGE~\citep{hamilton2018inductiverepresentationlearninglarge} for inductive graph encoding. At each layer $l$, a node $v$ updates its representation by first aggregating its neighbors $\mathcal{N}(v)$ and then combining this aggregation with its own embedding:
\begin{equation}
    \mathbf{m}_v^{(l)} = \text{AGG}^{(l)}\!\left(\{\,\mathbf{h}_u^{(l-1)}: u\in \mathcal{N}(v)\,\}\right),\quad
    \mathbf{h}_v^{(l)}= \sigma\!\Big(\mathbf{W}^{(l)}\big[\mathbf{h}_v^{(l-1)} \parallel \mathbf{m}_v^{(l)}\big]\Big)
\end{equation}
where $\text{AGG}^{(l)}$ is a permutation-invariant mean aggregator, $\mathbf{W}^{(l)}$ is a learnable projection matrix, $\parallel$ denotes concatenation, and $\sigma$ is a non-linear activation function (ReLU in our implementation). After $L$ layers, we obtain the final node embedding $\mathbf{z}_v=\mathbf{h}_v^{(L)}$. In conjunction with the style-aware initialization, $\mathbf{z}_v$ captures both preference and stylistic elements of the personalization target.

To estimate the likelihood of a new interaction between a user $u$ and an item $i$, we apply a decoder over their embeddings. Specifically, we compute a score with a multi-layer perception (MLP) and turn it into a probability as:
\begin{equation}
    s_{u,i^*}=\text{MLP}\!\left([\mathbf{z}_u \parallel \mathbf{z}_{i^*}]\right),\qquad
    \hat{y}_{u,i^*}=\text{Sigmoid}\!\left(s_{u,i^*}\right).
    \label{eq:graph inference}
\end{equation} 
The link predictor is trained with binary cross-entropy (BCE) loss, where observed interactions are positives and uniformly sampled non-interacted pairs are negatives:
\begin{equation}
\mathcal{L}_{\text{link}}
= -\!\!\sum_{(u,i)\in E^{+}}\!\!\log \sigma(s_{u,i})
 \;-\!\!\sum_{(u,i)\in E^{-}}\!\!\log\!\big(1-\sigma(s_{u,i})\big),
\end{equation}
with $E^{+}=E$ and $E^{-}$ constructed via negative sampling~\citep{zhang2018linkpredictionbasedgraph}.

\paragraph{Inference and Profile Augmentation.}
At inference, for each sparse user $u$, we score all candidate items 
$i \in \mathcal{I}\setminus\{\,i:(u,i)\in \mathcal{E}\,\}$, rank them by 
$s_{u, i^*}$, and select the top-$K$ predictions. Let 
$\mathcal{I}_u^K = \{\,i_{u,1},\dots,i_{u,K}\,\}$ denote this top-$K$ set. 
For each $i \in \mathcal{I}_u^K$, we generate a synthetic text (as detailed in Step~2) to approximate how $u$ 
might interact with it, and these texts are appended to $u$’s 
history. The result is an augmented user profile that incorporates both 
observed and predicted interactions:
\begin{equation}
\label{eq:tildeH}
   \tilde{H}_u = H_u \cup \{\,\tilde{t}_{u,i_{u,k}} : k=1,\dots,K\,\}.
\end{equation} 
where $\tilde{t}_{u,i}$ explicitly denotes the synthetically generated text 
for user $u$ on item $i$. Importantly, these predicted edges are used only locally for the given user $u$. 
For example, an edge $(u,i)$ generated for user $u$ does not affect another 
user $u'$ through the shared item $i$. Thus, we are not reconstructing a 
full new bipartite graph but enriching each user profile independently 
for downstream personalization. 

\subsection{Step 2: Synthetic Text Generation with Reasoning Alignment}
The augmented profiles from Step 1 yield predicted candidate items, but directly generating texts from these signals risks propagating noise or stylistic mismatch. To address this, we design a reasoning-based synthetic text generation process that integrates explicit reasoning before producing final outputs.

\paragraph{Synthetic Text Generation Setup.}  
For a target user $u$, let $\mathcal{S}_u$ denote the set of similar users identified from the user–item graph $G$, and let $\mathcal{I}_u$ be the set of items inferred in Step~1 that $u$ is likely to interact with. $\mathcal{S}_u$ is obtained by calculating the cosine similarity between the node embeddings $\mathbf{z}_v$. We select the top 3 similar users to construct $H_{S_u}$. For each item $i \in \mathcal{I}_u$, we aim to generate a synthetic text $\tilde{y}_{u,i}$ that reflects $u$’s style and preferences. The generation process conditions on three sources of input:
\begin{equation}
\label{eq:input}
    x = \{ H_u, \, H_{\mathcal{S}_u}, \, P_{u,i} \},
\end{equation}
where $H_u$ are past texts of $u$, $H_{\mathcal{S}_u}$ are texts written by similar users, and $P_{u,i}$ are peer texts associated with item $i$. Following \citet{au_personalized_2025}, $P_{u,i}$ is constructed by ranking all texts associated with item $i$ using the BM25 retrieval model~\citep{robertson1995okapi} and selecting the top 4 most relevant entries, where relevance is measured by semantic similarity to the input query. 

\paragraph{Reasoning Path Generation.}  
We use a language model $\mathcal{M}$ to produce intermediate rationales that explain why $u$ might write for item $i$ in a certain way. Formally, a reasoning path $\mathcal{Z}$ is an intermediate textual explanation conditioned on $x$, such as \emph{``User $u$ tends to prefer lightweight laptops; similar users highlighted battery life for item $i$; hence the review should emphasize portability and battery.''} During training, for each text entry $t_{u,j}$, we let $x=\{H_u\setminus t_{u,j}, H_{\mathcal{S}_u}, P_{u,j}\}$. We then obtain a set of candidate reasoning paths by sampling with a formatted prompt $\phi(x, t_{u,j})$(The prompt formulation is supplied in Appendix~\ref{appendices:prompts}) that includes the input and the expected output
\begin{equation}
\label{eq:reasoning_sample}
    \mathcal{Z}^{(r)} = \mathcal{M}(\phi(x, t_{u,j})), \quad r=1,2,\ldots,R.
\end{equation}
We sample the candidate reasoning paths because not all candidate reasoning paths are equally reliable. We select a \emph{golden} reasoning path $\mathcal{Z}^*$ that best aligns with ground-truth outputs by maximizing task performance under an evaluation metric $\Omega$, which we take the average of the ROUGE and METEOR scores (see \cref{app:metrics} for more details on the evaluation metrics)
\begin{equation}
\label{eq:reasoning_gold}
\mathcal{Z}^* = \arg\max_{\mathcal{Z}} \, \Omega\big(t'_{u,j}, t_{u,j}\big), \quad
\text{where } t'^{(r)}_{u,j} = \mathcal{M}(\xi(x, \mathcal{Z}^{(r)})),
\end{equation}
with $t'^{(k)}_{u,j}$ denoting the generated synthetic text, and $\xi$ a prompt-construction function combining $x$ and $\mathcal{Z}$. The specification of $\xi$ is supplied in Appendix~\ref{appendices:prompts}.

\paragraph{Reasoning Alignment.}  
Finally, the model is fine-tuned to jointly generate the selected reasoning path $\mathcal{Z}^*$ and text $t_{u,j}$:
\begin{equation}
\label{reasoning_final}
    \mathcal{L}_{\text{gen}} = \text{CE}\!\left(\mathcal{M}(\rho(x)), \, t_{u,j}\right),
\end{equation} 
where $\rho$ is the prompt formatting function that is designed to generate both the reasoning and the output (The prompt formulation is supplied in Appendix~\ref{appendices:prompts}). $\mathcal{M}(\rho(x))$ is trained to output $\mathcal{Z}^*$ followed by the text. It enables the model to leverage noisy augmentation while staying faithful to user-specific style and preferences. Let the fine-tuned model be $\mathcal{M}'$, for a predicted item $i\in\mathcal{I}_u^{K}$, the final synthetic text $\tilde{t}_{u,i}$ is then generated with $\mathcal{M}'(x)\setminus \mathcal{Z}'$ where $\mathcal{Z}'$ is the generated reasoning.

\subsection{Step 3: Personalized Text Generation}
\label{sec:text_gen}

The final stage reuses the reasoning aligned language model from Step~2, which defines a mapping from user histories, similar-user signals, and candidate items to reasoning paths and synthetic texts. In Step~3, we employ the same function for the personalization task.

Let $\mathcal{I}_u$ be the set of candidate items inferred in Step~1. For each $i \in \mathcal{I}_u$, Step~2 already learns to map from the user's profile history, texts written by similar users, and peer texts associated with the predicted target item, as shown in ~\cref{eq:input}. Note that for personalized text generation for a specific target item $i^*$, we have the same input formulation $x^*=\big[\tilde{H}_u, \, H_{S_u}, P_{u, i^{*}}\big]$ where insteadwof profile history, $\tilde{H}_u$ is the augmented history obtained from Step~1. Since the reasoning–generation function is shared, we reuse the fine-tuned model $\mathcal{M}'$ in Step~2 for the target text generation. Given the input $x^*$, the fine-tuned model $\mathcal{M}'$ produces both a reasoning path $z^*$ and a personalized text $\hat{t}_{u,i^*}$:
\begin{equation}
\mathcal{M}'(x^*) = [z^* \parallel \hat{t}_{u,i^*}].
\end{equation}
The final prediction strips away the reasoning tokens:
\begin{equation}
\hat{t}_{u,i^*} = \mathcal{M}'(x^*) \setminus z^*.
\end{equation}
\section{Experiments}
\label{sec:experiment}

\subsection{Datasets and Metrics}
We follow the experiment setup in prior works~\citep{au_personalized_2025}, which consists of three datasets: Amazon Review~\citep{ni-mcauley-2018-personalized}, Hotel Review~\citep{kanouchi-etal-2020-may}, and Stylized Feedbacks~\citep{alhafni-etal-2024-personalized}. We cover three personalization tasks: long text generation, short text generation, and rating prediction. For long text generation, a title will be given to guide the generation, and for short text generation, a paragraph will be given for summarization. 

For both long and short text generation tasks, we adopt widely used lexical overlap metrics, including ROUGE-1, ROUGE-L, and METEOR, following prior work~\citep{au_personalized_2025, kumar2024longlampbenchmarkpersonalizedlongform, salemi2023lamp}. To complement these surface-level metrics, we further incorporate LLM-as-a-Judge evaluation~\citep{salemi_reasoning-enhanced_2025, liu-etal-2023-g}, where a strong language model provides comparative assessments of personalization. Additional details on LLM-as-a-Judge 
can be found in \cref{appendix:lj}. For rating prediction, we use Root Mean Squared Error (RMSE) and Mean Average Error (MAE). 
Further details on the metrics and experimental setup can be found in \cref{appendices:exp-set-up}.

\begin{table*}[t]
\renewcommand{\arraystretch}{0.85}
    \centering
    \scriptsize
    \caption{
    Results comparing the proposed approach called \method to state-of-the-art methods across 3 different tasks  
    on the Amazon Review benchmark.}
    
    \label{tab:amazon_results}
    \begin{tabular}{@{}llc@{}cccc@{}}
        \toprule
        \textbf{Task} & \textbf{Metric} & \textbf{LLM} & 
        \textbf{\method {\small \rm \;(ours)}} & 
        \textbf{PGraph} & \textbf{LaMP} & \textbf{REST-PG} \\
        \midrule
        \multirow{9}{*}{\textcolor{googlegreen}{\textbf{\sc \bfseries Long Text Gen.}}}
        & \multirow{2}{*}{ROUGE-1 $\uparrow$}  & \textit{4o-mini} & \textbf{0.219} & 0.189  & 0.171 &  N/A \\
        &                           & \textit{LlaMA3}     & \textbf{0.215} & 0.178  & 0.173 & 0.165 \\
        \cmidrule(l){2-7}
        & \multirow{2}{*}{ROUGE-L $\uparrow$}  & \textit{4o-mini}             & \textbf{0.170} & 0.130  & 0.117 & N/A \\

        &              & \textit{LlaMA3}               & \textbf{0.171} & 0.129  & 0.129 & 0.109 \\
        \cmidrule(l){2-7}
        & \multirow{2}{*}{METEOR $\uparrow$}  & \textit{4o-mini}             & 0.182 & \textbf{0.196}  & 0.176 & N/A \\
        &                           & \textit{LlaMA3}               & \textbf{0.178} & 0.151  & 0.138 & 0.122 \\
        \cmidrule(l){2-7}
        & \multirow{2}{*}{LLM-as-a-Judge $\uparrow$}  & \textit{4o-mini}  & \textbf{0.421}  & 0.389 & 0.328 & N/A \\
        &              & \textit{LlaMA3} & \textbf{0.337} & 0.297  & 0.277 & 0.269 \\
        \midrule
        \multirow{9}{*}{\textcolor{googleblue}{\textbf{\sc \bfseries Short Text Gen.}}}
        & \multirow{2}{*}{ROUGE-1 $\uparrow$}  & \textit{4o-mini}             & \textbf{0.178} & 0.115  & 0.108 & N/A \\
        &                           & \textit{LlaMA}               & \textbf{0.155} & 0.131  & 0.124 & 0.089 \\
        \cmidrule(l){2-7}
        & \multirow{2}{*}{ROUGE-L $\uparrow$} & \textit{4o-mini}             & \textbf{0.174} & 0.112  & 0.105 & N/A \\
        &                           & \textit{LlaMA3}               & \textbf{0.153} & 0.125  & 0.118 & 0.081 \\
        \cmidrule(l){2-7}
        & \multirow{2}{*}{METEOR $\uparrow$} & \textit{4o-mini}             & \textbf{0.162} & 0.099  & 0.091 & N/A \\
        &                           & \textit{LlaMA3}               & \textbf{0.142} & 0.125  & 0.117 & 0.113 \\
        \cmidrule(l){2-7}
        & \multirow{2}{*}{LLM-as-a-Judge $\uparrow$}   & \textit{4o-mini}  & \textbf{0.406} & 0.353 & 0.334 & N/A \\
        &              & \textit{LlaMA3}               & \textbf{0.304} & 0.241  & 0.228 & 0.232 \\
        \midrule
        \multirow{4}{*}{\textcolor{googlered}{\textbf{\sc \bfseries Rating Pred.}}}
        & \multirow{2}{*}{RMSE $\downarrow$}  & \textit{4o-mini}             & \textbf{0.33}  & 0.38   & 0.34  & N/A \\
        \multirow{4}{*}{\textcolor{googlered}{\scriptsize (Recommendation)}}
        &                           & \textit{LlaMA3}               & \textbf{0.32}  & 0.76   & 0.72  & 0.65 \\
        \cmidrule(l){2-7}
        & \multirow{2}{*}{MAE $\downarrow$}    & \textit{4o-mini}             & \textbf{0.34}  & 0.73   & 0.70  & N/A \\
        &                           & \textit{LlaMA3}               & \textbf{0.31}  & 0.34   & 0.31  & 0.46 \\
        \bottomrule
    \end{tabular}
\end{table*}
\begin{table*}[t!]
\renewcommand{\arraystretch}{0.75}
    \centering
    \scriptsize
    \caption{Performance Metrics for Hotel Experience Generation.}
    \label{tab:hotel_results}
    \begin{tabular}{@{}llc@{}cccc@{}}
        \toprule
        \textbf{Task} & \textbf{Metric} & \textbf{LLM} &
        \textbf{\method {\small \rm \;(ours)}} &
        \textbf{PGraph} & \textbf{LaMP} & \textbf{REST-PG} \\
        \midrule
        \multirow{8}{*}{\textcolor{googlegreen}{\textbf{\sc \bfseries Long Text Gen.}}}
        & \multirow{2}{*}{ROUGE-1 $\uparrow$}  & \textit{4o-mini} & 0.257 & \textbf{0.263} & 0.221 & N/A \\
        &                                     & \textit{LlaMA3}  & 0.258 & \textbf{0.263} & 0.199 & 0.221 \\
        \cmidrule(l){2-7}
        & \multirow{2}{*}{ROUGE-L $\uparrow$} & \textit{4o-mini} & \textbf{0.163} & 0.152 & 0.135 & N/A \\
        &                                     & \textit{LlaMA3}  & \textbf{0.168} & 0.157 & 0.129 & 0.131 \\
        \cmidrule(l){2-7}
        & \multirow{2}{*}{METEOR $\uparrow$}  & \textit{4o-mini} & 0.165 & 0.184 & 0.164 & N/A \\
        &                                     & \textit{LlaMA3}  & 0.165 & \textbf{0.191} & 0.152 & 0.145 \\
        \cmidrule(l){2-7}
        & \multirow{2}{*}{LLM-as-a-Judge $\uparrow$} & \textit{4o-mini} & \textbf{0.500} & 0.414 & 0.300 & N/A \\
        &                                            & \textit{LlaMA3}  & \textbf{0.488} & 0.372 & 0.246 & 0.369 \\
        \midrule
        \multirow{8}{*}{\textcolor{googleblue}{\textbf{\sc \bfseries Short Text Gen.}}}
        & \multirow{2}{*}{ROUGE-1 $\uparrow$} & \textit{4o-mini} & \textbf{0.135} & 0.112 & 0.108 & N/A \\
        &                                     & \textit{LlaMA3}  & \textbf{0.147} & 0.127 & 0.126 & 0.100 \\
        \cmidrule(l){2-7}
        & \multirow{2}{*}{ROUGE-L $\uparrow$} & \textit{4o-mini} & \textbf{0.128} & 0.111 & 0.104 & N/A \\
        &                                     & \textit{LlaMA3}  & \textbf{0.140} & 0.118 & 0.117 & 0.091 \\
        \cmidrule(l){2-7}
        & \multirow{2}{*}{METEOR $\uparrow$}  & \textit{4o-mini} & \textbf{0.120} & 0.081 & 0.075 & N/A \\
        &                                     & \textit{LlaMA3}  & \textbf{0.130} & 0.102 & 0.106 & 0.084 \\
        \cmidrule(l){2-7}
        & \multirow{2}{*}{LLM-as-a-Judge $\uparrow$} & \textit{4o-mini} & \textbf{0.469} & 0.360 & 0.346 & N/A \\
        &                                            & \textit{LlaMA3}  & \textbf{0.304} & 0.224 & 0.228 & 0.215 \\
        \midrule
        \multirow{4}{*}{\textcolor{googlered}{\textbf{\sc \bfseries Rating Pred.}}}
        & \multirow{2}{*}{RMSE $\downarrow$}  & \textit{4o-mini} & 0.660 & \textbf{0.328} & 0.340 & N/A \\
        &                                     & \textit{LlaMA3}  & \textbf{0.322} & 0.347 & 0.326 & 0.335 \\
        \cmidrule(l){2-7}
        & \multirow{2}{*}{MAE $\downarrow$}   & \textit{4o-mini} & 0.356 & \textbf{0.336} & 0.700 & N/A \\
        &                                     & \textit{LlaMA3}  & \textbf{0.520} & 0.724 & 0.680 & 0.642 \\
        \bottomrule
    \end{tabular}
    \vspace{-0.5em}
\end{table*}

\subsection{Baselines}
We benchmark against three state-of-the-art personalization baselines. \textbf{LaMP}~\citep{salemi2023lamp} conditions on a user's past writing via prompts but uses no graph learning or reasoning. \textbf{PGraphRAG}~\citep{au_personalized_2025} (i.e., PGraph) employs graph-based retrieval augmented generation with BM25 for personalization but lacks reasoning or fine-tuning. \textbf{REST-PG}~\citep{salemi_reasoning-enhanced_2025} models user preferences through reasoning paths with iterative fine-tuning, but it does not employ any context expansion. Expanded baseline descriptions can be found in \cref{appendices:exp-set-up}. We evaluate LaMP and PGraph with LLaMA-3-8b-instruct and GPT-4o mini; REST-PG is implemented with LLaMA-3-8b-instruct only due to its fine-tuning requirement. 

\subsection{Main Results}
We report the experimental results for the Amazon Reviews, Hotel Experience, and Stylized Feedback datasets in \cref{tab:amazon_results,tab:hotel_results,tab:gap_results}, respectively. These results evaluate our approach across three tasks—long text generation, short text generation, and ordinal classification—and compare its performance to the state-of-the-art personalization baselines. Additional results are provided in Appendix ~\ref{appendices:additional_results}.

Overall, our method consistently outperforms the baselines across the datasets and tasks. For long text generation, we observe significant improvements in ROUGE-1, ROUGE-L, and METEOR scores, demonstrating the model’s ability to generate more accurate and contextually relevant outputs through reasoning enhanced retrieval and generation. Notably, the Amazon Reviews dataset shows the largest performance gains. This can be attributed to it having the fewest average degree of 1.68 comparing to 2.12 and 2.42 for the Hotel Experience dataset and Sylized Feedback dataset (see Appendix~\ref{appendices:dataset}). The increased sparsity leads to better performance gain with the proposed method. For ordinal classification, our approach achieves lower RMSE and MAE compared to the baselines, indicating better alignment with user rating tendencies. 
\begin{table*}[t!]
\renewcommand{\arraystretch}{0.75}
    \centering
    \scriptsize
    \caption{Performance Metrics for Stylized Feedback Generation.}
    \label{tab:gap_results}
    \begin{tabular}{@{}llc@{}cccc@{}}
        \toprule
        \textbf{Task} & \textbf{Metric} & \textbf{LLM} &
        \textbf{\method {\small \rm \;(ours)}} &
        \textbf{PGraph} & \textbf{LaMP} & \textbf{REST-PG} \\
        \midrule
        \multirow{8}{*}{\textcolor{googlegreen}{\textbf{\sc \bfseries Long Text Gen.}}}
        & \multirow{2}{*}{ROUGE-1 $\uparrow$}  & \textit{4o-mini} & \textbf{0.214} & 0.185 & 0.187 & N/A \\
        &                                     & \textit{LlaMA3}  & \textbf{0.235} & 0.217 & 0.186 & 0.189 \\
        \cmidrule(l){2-7}
        & \multirow{2}{*}{ROUGE-L $\uparrow$} & \textit{4o-mini} & \textbf{0.147} & 0.123 & 0.123 & N/A \\
        &                                     & \textit{LlaMA3}  & \textbf{0.175} & 0.158 & 0.134 & 0.127 \\
        \cmidrule(l){2-7}
        & \multirow{2}{*}{METEOR $\uparrow$}  & \textit{4o-mini} & 0.178 & \textbf{0.183} & 0.189 & N/A \\
        &                                     & \textit{LlaMA3}  & 0.175 & \textbf{0.178} & 0.177 & 0.173 \\
        \cmidrule(l){2-7}
        & \multirow{2}{*}{LLM-as-a-Judge $\uparrow$} & \textit{4o-mini} & \textbf{0.423} & 0.399 & 0.318 & N/A \\
        &                                           & \textit{LlaMA3}  & \textbf{0.340} & 0.340 & 0.273 & 0.281 \\
        \midrule
        \multirow{8}{*}{\textcolor{googleblue}{\textbf{\sc \bfseries Short Text Gen.}}}
        & \multirow{2}{*}{ROUGE-1 $\uparrow$}  & \textit{4o-mini} & \textbf{0.137} & 0.122 & 0.113 & N/A \\
        &                                     & \textit{LlaMA3}  & \textbf{0.160} & 0.149 & 0.140 & 0.097 \\
        \cmidrule(l){2-7}
        & \multirow{2}{*}{ROUGE-L $\uparrow$} & \textit{4o-mini} & \textbf{0.133} & 0.118 & 0.109 & N/A \\
        &                                     & \textit{LlaMA3}  & \textbf{0.157} & 0.142 & 0.134 & 0.091 \\
        \cmidrule(l){2-7}
        & \multirow{2}{*}{METEOR $\uparrow$}  & \textit{4o-mini} & \textbf{0.144} & 0.104 & 0.096 & N/A \\
        &                                     & \textit{LlaMA3}  & 0.131 & \textbf{0.142} & 0.136 & 0.112 \\
        \cmidrule(l){2-7}
        & \multirow{2}{*}{LLM-as-a-Judge $\uparrow$} & \textit{4o-mini} & \textbf{0.395} & 0.343 & 0.331 & N/A \\
        &                                           & \textit{LlaMA3}  & \textbf{0.284} & 0.242 & 0.236 & 0.255 \\
        \midrule
        \multirow{4}{*}{\textcolor{googlered}{\textbf{\sc \bfseries Rating Pred.}}}
        & \multirow{2}{*}{RMSE $\downarrow$}  & \textit{4o-mini} & \textbf{0.637} & 0.673 & 0.667 & N/A \\
        &                                     & \textit{LlaMA3}  & \textbf{0.684} & 0.724 & 0.680 & 0.678 \\
        \cmidrule(l){2-7}
        & \multirow{2}{*}{MAE $\downarrow$}   & \textit{4o-mini} & \textbf{0.332} & 0.347 & 0.344 & N/A \\
        &                                     & \textit{LlaMA3}  & \textbf{0.337} & 0.347 & 0.327 & 0.326 \\
        \bottomrule
    \end{tabular}
    \vspace{-0.5em}
\end{table*}

Furthermore, we find that the relative advantage of our method becomes more pronounced under LLM-as-a-Judge evaluation. Compared with conventional textual similarity metrics, which primarily capture surface-level overlap, LLM-as-a-Judge better aligns with human preference when assessing personalization~\citep{salemi_reasoning-enhanced_2025, liu-etal-2023-g}. This is because personalization often extends beyond literal similarity to reflect nuanced aspects such as style, tone, and contextual coherence. The gains observed in this setting demonstrate that \method, by jointly leveraging context expansion and reasoning alignment, is able to generalize user-specific stylistic patterns more effectively, leading to outputs that are not only accurate but also more faithful to individual user preferences.

\begin{table*}[t]
\renewcommand{\arraystretch}{0.75}
    \centering
    \scriptsize
    \caption{Module ablation studies for Amazon Reviews dataset with the Llama3 backbone.
    \method-ft is an ablation without fine-tuning, and \method-r-ft is without reasoning and fine-tuning. 
    }
    \label{tab:ablation}
    \begin{tabular}{llcccccc}
        \toprule
        \textbf{Task} & \textbf{Metric} & \method & \method-ft & \method-r-ft & \textbf{PGraph} & \textbf{LaMP} & \textbf{REST-PG}\\
        \midrule
        \textbf{Text Generation} & ROUGE-1 & \textbf{0.215} & 0.175 & \underline{0.182}  & 0.178  & 0.173 & 0.165 \\
        & ROUGE-L & \textbf{0.171} & 0.121 & 0.125 & \underline{0.129} & \underline{0.129} & 0.009 \\
        & METEOR & 0.178 &  \underline{0.188} & \textbf{0.200} & 0.151 & 0.138 & 0.122 \\
        \midrule
        \textbf{Title Generation} & ROUGE-1 & \textbf{0.155} & 0.100 & 0.118 & \underline{0.131} & 0.124 & 0.112 \\
        & ROUGE-L & \textbf{0.153} & 0.098 & 0.117 & \underline{0.125} & 0.118 & 0.077\\
        & METEOR & \textbf{0.142} & 0.085 & 0.111 & \underline{0.125} & 0.117 & 0.113 \\
        \midrule
        \textbf{Rating Prediction} & RMSE & \textbf{0.32} & 0.53 & \underline{0.52} & \underline{0.52} & 0.72 & 0.65 \\
        & MAE & \textbf{0.31} & 0.39 & 0.36 & \underline{0.34} & \textbf{0.31} & 0.46 \\
        \bottomrule
    \end{tabular}
\end{table*}

\subsection{Ablation Studies}
We conduct extensive ablations on \method, examining its components to validate the framework's effectiveness. We further analyze the key hyperparameter K (number of predicted items) along with supporting theory. Results using additional language models are presented in Appendix~\ref{appendices:additional_results}. 

\subsubsection{Model Variants}
In Table~\ref{tab:ablation}, we ablate \method with additional variants to demonstrate the effectiveness of its two main contributions: personal context expansion and reasoning alignment. \method-ft removes the fine-tuning for reasoning, which means it will only include the reasoning prompt. \method-r-ft further removes the reasoning process and lets the model directly generate the final output with the input as specified in \cref{eq:input}. The full model (\method) consistently outperforms its reduced counterparts, 
confirming that each component is indispensable to the framework. 

\noindent \textbf{Effect of Context Expansion.} 
Personal context expansion provides additional evidence for personalization, but without proper reasoning, the augmented context can introduce noise. This is reflected in \method-ft-r, which relies solely on link prediction without reasoning or fine-tuning. These results indicate that context expansion alone is insufficient and may even hurt performance if not paired with reasoning alignment. The effect of noise-induced bias is extensively discussed in \cref{appendices:theory}. 

\noindent\textbf{Effect of Reasoning Alignment.}
Reasoning alignment ensures that the augmented context contributes in a way that matches user preferences and task requirements. Comparing \method-ft (with reasoning but no fine-tuning) to \method shows that reasoning alignment improves performance across metrics. Additionally, reasoning alignment without additional context (as in REST-PG) also underperforms, since the model lacks sufficient personalized evidence to reason over. 

The results demonstrate that context expansion and reasoning alignment are complementary. With only context expansion (\method-r-ft), the model introduces noise and degrades performance. With only reasoning, even when aligned with finetuning (e.g., REST-PG), the model has nothing substantial to reason over. Only by combining both can \method achieve better personalization across text generation and rating prediction.

\subsubsection{Hyperparameter Analysis}
\label{exp:k}
The hyperparameter $K$ as defined in \cref{eq:tildeH} controls how many candidate items we add via the link predictor when augmenting a user’s personal context. In our main experiments, we fix K = 2 for efficiency, but the design of our method allows more effective and robust use with larger values of K. By comparison, PGraph expands context by retrieving K nearest reviews to the query embedding. However, because it lacks explicit reasoning alignment, its performance drops as K grows — the extra retrieved context introduces more noise than benefit (as seen in Table~\ref{tab:k_sensitivity}). 

Our theoretical analysis (detailed in Appendix~\ref{appendices:theory}) shows that this phenomenon follows a bias–variance trade-off. Adding more synthetic context reduces variance (helping especially sparse users with few real samples), but also introduces bias from preference mismatch. Without alignment, the bias term grows with K, limiting the benefit. However, with reasoning alignment, the mismatch is effectively shrunk: the bias is reduced, so larger K values become safe and beneficial. This explains why in experiments our method continues to improve as K increases, while PGraph plateaus or even degrades. Full mathematical details of this trade-off are provided in the \cref{prop:tradeoff}.

\begin{table*}[t]
\renewcommand{\arraystretch}{0.75}
    \centering
    \scriptsize
    \caption{Sensitivity of \method\ and PGraph to the neighborhood size $k$ on the Amazon Reviews dataset with the Llama3 backbone. Best in \textbf{bold}, second-best \underline{underlined}.}
    \label{tab:k_sensitivity}
    \begin{tabular}{llcccccccc}
        \toprule
        \multirow{2}{*}{\textbf{Task}} & \multirow{2}{*}{\textbf{Metric}} 
            & \multicolumn{4}{c}{\method} & \multicolumn{4}{c}{PGraph} \\
        \cmidrule(lr){3-6} \cmidrule(lr){7-10}
        & & \textbf{$K{=}1$} & \textbf{$K{=}2$} & \textbf{$K{=}3$} & \textbf{$K{=}4$} 
          & \textbf{$K{=}1$} & \textbf{$K{=}2$} & \textbf{$K{=}3$} & \textbf{$K{=}4$} \\
        \midrule
        \multirow{3}{*}{\textbf{Long Text Gen.}} 
            & ROUGE-1 & \textbf{0.226} & 0.215 & 0.222 & \underline{0.225} & 0.160 & 0.169 & \underline{0.169} & \textbf{0.173} \\
            & ROUGE-L & 0.172 & 0.171 & \underline{0.173} & \textbf{0.177} & 0.121 & \textbf{0.125} & 0.124 & 0.124 \\
            & METEOR  & 0.184 & 0.178 & \underline{0.189} & \textbf{0.196} & 0.125 & 0.138 & 0.149 & \textbf{0.150} \\
        \midrule
        \multirow{3}{*}{\textbf{Short Text Gen.}} 
            & ROUGE-1 & 0.168 & 0.155 & \underline{0.168} & \textbf{0.195} & \textbf{0.128} & 0.123 & 0.112 & 0.125 \\
            & ROUGE-L &\underline{0.173} & 0.153 & 0.165 & \textbf{0.192} & \textbf{0.121} & 0.118 & 0.107 & 0.119 \\
            & METEOR  & 0.160 & 0.142 & \underline{0.169} & \textbf{0.185} & 0.123 & \textbf{0.128} & 0.116 & 0.117 \\
        \bottomrule
    \end{tabular}
\end{table*}

\begin{figure*}[t]
    \centering
    \includegraphics[width=0.9\linewidth]{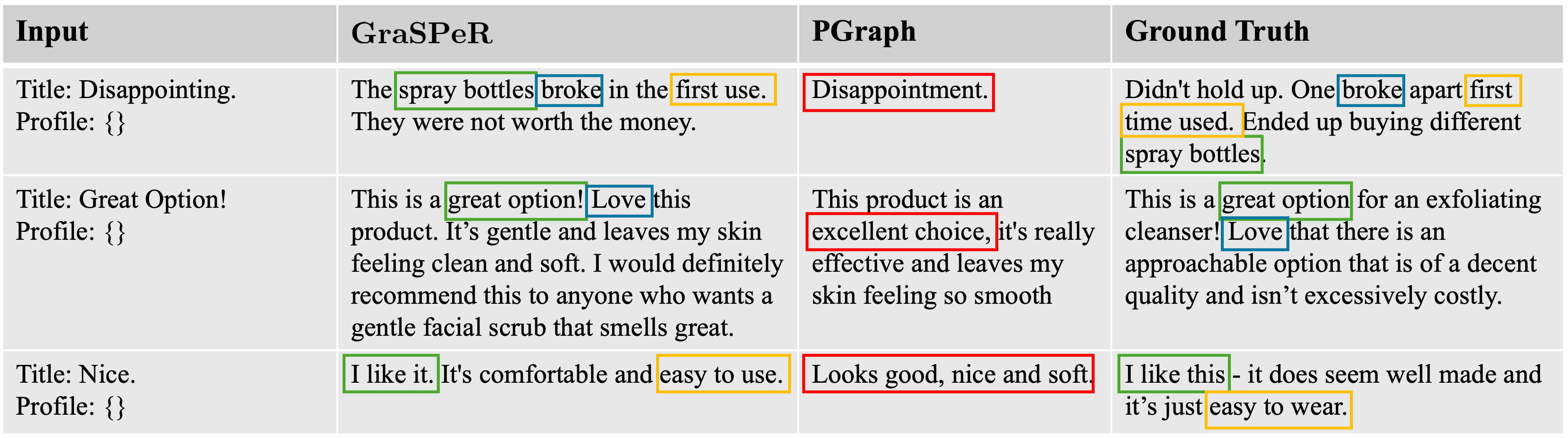}
    \caption{Case study with three examples. The matching green, blue, and yellow boxes show matching semantics or expression. The red box shows misalignment against the ground truth.}
    \label{fig:case_study}
    \vspace{-0.1in}
\end{figure*}

\subsection{Case Study}
To better illustrate how \method generates more faithful and personalized outputs, we present a case study in Figure~\ref{fig:case_study}. We compare outputs from \method and PGraph against the ground truth under different input titles. As highlighted by the colored boxes, \method consistently captures key semantics and stylistic expressions that align with the reference. For instance, in the first example, \method generates “spray bottles broke in the first use,” which mirrors both the semantics ("first use") and specific product mentioned (“spray bottles”) in the ground truth. In contrast, PGraph only outputs a vague summary (“Disappointment”) without grounding in the product context.

In the second example, \method reproduces stylistic markers such as “great option” and “Love”. Although PGraph's output "excellent choice" also captures the semantic meaning, it diverges from the ground truth in word choice. Finally, in the third example, both \method and the ground truth emphasize short, colloquial phrasing (``I like it / I like this''). \method also correctly matches the emphasis on usability (“easy to use / easy to wear”), while PGraph generates more general, less faithful wording (“Looks good, nice and soft”), deviating from the intended expression. These qualitative examples support our quantitative findings: context expansion and reasoning alignment together enable \method to preserve fine-grained semantics and stylistic fidelity, while methods that rely only on raw augmentation often produce generic or misaligned outputs. 
\section{Conclusion}
In this work, we proposed \method, a reasoning-based framework for personalized text generation under sparse user contexts. By combining graph-based context expansion with explicit reasoning alignment, our method effectively enriches limited personal histories while ensuring generated outputs remain faithful to user style and preferences. Extensive experiments across datasets in different domains demonstrate that \method significantly outperforms strong baselines.

Our findings highlight that context expansion and reasoning alignment are complementary: expansion alone risks introducing noise, while reasoning without sufficient context lacks grounding. Together, they enable models to better capture fine-grained semantics and stylistic fidelity, even for long-tail users with minimal histories. We believe this reasoning-enhanced paradigm opens promising directions for future research in large language model personalization, especially in real-world scenarios where data sparsity is the norm.

\bibliography{main}

@article{hou2024bridging,
  title={Bridging language and items for retrieval and recommendation},
  author={Hou, Yupeng and Li, Jiacheng and He, Zhankui and Yan, An and Chen, Xiusi and McAuley, Julian},
  journal={arXiv preprint arXiv:2403.03952},
  year={2024}
}

@inproceedings{ni-mcauley-2018-personalized,
    title = "Personalized Review Generation By Expanding Phrases and Attending on Aspect-Aware Representations",
    author = "Ni, Jianmo  and
      McAuley, Julian",
    editor = "Gurevych, Iryna  and
      Miyao, Yusuke",
    booktitle = "Proceedings of the 56th Annual Meeting of the Association for Computational Linguistics (Volume 2: Short Papers)",
    month = jul,
    year = "2018",
    address = "Melbourne, Australia",
    publisher = "Association for Computational Linguistics",
    url = "https://aclanthology.org/P18-2112/",
    doi = "10.18653/v1/P18-2112",
    pages = "706--711"
}

@misc{zhang2025personalizationlargelanguagemodels,
      title={Personalization of Large Language Models: A Survey}, 
      author={Zhehao Zhang and Ryan A. Rossi and Branislav Kveton and Yijia Shao and Diyi Yang and Hamed Zamani and Franck Dernoncourt and Joe Barrow and Tong Yu and Sungchul Kim and Ruiyi Zhang and Jiuxiang Gu and Tyler Derr and Hongjie Chen and Junda Wu and Xiang Chen and Zichao Wang and Subrata Mitra and Nedim Lipka and Nesreen Ahmed and Yu Wang},
      year={2025},
      eprint={2411.00027},
      archivePrefix={arXiv},
      primaryClass={cs.CL},
      url={https://arxiv.org/abs/2411.00027}, 
}

@misc{salemi2024lamplargelanguagemodels,
      title={LaMP: When Large Language Models Meet Personalization}, 
      author={Alireza Salemi and Sheshera Mysore and Michael Bendersky and Hamed Zamani},
      year={2024},
      eprint={2304.11406},
      archivePrefix={arXiv},
      primaryClass={cs.CL},
      url={https://arxiv.org/abs/2304.11406}, 
}

@inproceedings{10.1145/3459637.3482313,
author = {Zhao, Tong and Ni, Bo and Yu, Wenhao and Guo, Zhichun and Shah, Neil and Jiang, Meng},
title = {Action Sequence Augmentation for Early Graph-based Anomaly Detection},
year = {2021},
isbn = {9781450384469},
publisher = {Association for Computing Machinery},
address = {New York, NY, USA},
url = {https://doi.org/10.1145/3459637.3482313},
doi = {10.1145/3459637.3482313},
booktitle = {Proceedings of the 30th ACM International Conference on Information \& Knowledge Management},
pages = {2668–2678},
numpages = {11},
location = {Virtual Event, Queensland, Australia},
series = {CIKM '21}
}

@article{Zhao_Liu_Neves_Woodford_Jiang_Shah_2021, title={Data Augmentation for Graph Neural Networks}, volume={35}, url={https://ojs.aaai.org/index.php/AAAI/article/view/17315}, DOI={10.1609/aaai.v35i12.17315}, abstractNote={Data augmentation has been widely used to improve generalizability of machine learning models. However, comparatively little work studies data augmentation for graphs. This is largely due to the complex, non-Euclidean structure of graphs, which limits possible manipulation operations. Augmentation operations commonly used in vision and language have no analogs for graphs. Our work studies graph data augmentation for graph neural networks (GNNs) in the context of improving semi-supervised node-classification. We discuss practical and theoretical motivations, considerations and strategies for graph data augmentation. Our work shows that neural edge predictors can effectively encode class-homophilic structure to promote intra-class edges and demote inter-class edges in given graph structure, and our main contribution introduces the GAug graph data augmentation framework, which leverages these insights to improve performance in GNN-based node classification via edge prediction. Extensive experiments on multiple benchmarks show that augmentation via GAug improves performance across GNN architectures and datasets.}, number={12}, journal={Proceedings of the AAAI Conference on Artificial Intelligence}, author={Zhao, Tong and Liu, Yozen and Neves, Leonardo and Woodford, Oliver and Jiang, Meng and Shah, Neil}, year={2021}, month={May}, pages={11015-11023} }

@inproceedings{wegmann-etal-2022-author,
    title = "Same Author or Just Same Topic? Towards Content-Independent Style Representations",
    author = "Wegmann, Anna  and
      Schraagen, Marijn  and
      Nguyen, Dong",
    editor = "Gella, Spandana  and
      He, He  and
      Majumder, Bodhisattwa Prasad  and
      Can, Burcu  and
      Giunchiglia, Eleonora  and
      Cahyawijaya, Samuel  and
      Min, Sewon  and
      Mozes, Maximilian  and
      Li, Xiang Lorraine  and
      Augenstein, Isabelle  and
      Rogers, Anna  and
      Cho, Kyunghyun  and
      Grefenstette, Edward  and
      Rimell, Laura  and
      Dyer, Chris",
    booktitle = "Proceedings of the 7th Workshop on Representation Learning for NLP",
    month = may,
    year = "2022",
    address = "Dublin, Ireland",
    publisher = "Association for Computational Linguistics",
    url = "https://aclanthology.org/2022.repl4nlp-1.26/",
    doi = "10.18653/v1/2022.repl4nlp-1.26",
    pages = "249--268"
}

@misc{zhang2018linkpredictionbasedgraph,
      title={Link Prediction Based on Graph Neural Networks}, 
      author={Muhan Zhang and Yixin Chen},
      year={2018},
      eprint={1802.09691},
      archivePrefix={arXiv},
      primaryClass={cs.LG},
      url={https://arxiv.org/abs/1802.09691}, 
}

@misc{hamilton2018inductiverepresentationlearninglarge,
      title={Inductive Representation Learning on Large Graphs}, 
      author={William L. Hamilton and Rex Ying and Jure Leskovec},
      year={2018},
      eprint={1706.02216},
      archivePrefix={arXiv},
      primaryClass={cs.SI},
      url={https://arxiv.org/abs/1706.02216}, 
}

@misc{wang2025bridgingreviewsparsityrecommendation,
      title={Towards Bridging Review Sparsity in Recommendation with Textual Edge Graph Representation}, 
      author={Leyao Wang and Xutao Mao and Xuhui Zhan and Yuying Zhao and Bo Ni and Ryan A. Rossi and Nesreen K. Ahmed and Tyler Derr},
      year={2025},
      eprint={2508.01128},
      archivePrefix={arXiv},
      primaryClass={cs.IR},
      url={https://arxiv.org/abs/2508.01128}, 
}

@inproceedings{kanouchi-etal-2020-may,
    title = "You May Like This Hotel Because ...: Identifying Evidence for Explainable Recommendations",
    author = "Kanouchi, Shin  and
      Neishi, Masato  and
      Hayashibe, Yuta  and
      Ouchi, Hiroki  and
      Okazaki, Naoaki",
    editor = "Wong, Kam-Fai  and
      Knight, Kevin  and
      Wu, Hua",
    booktitle = "Proceedings of the 1st Conference of the Asia-Pacific Chapter of the Association for Computational Linguistics and the 10th International Joint Conference on Natural Language Processing",
    month = dec,
    year = "2020",
    address = "Suzhou, China",
    publisher = "Association for Computational Linguistics",
    url = "https://aclanthology.org/2020.aacl-main.89",
    doi = "10.18653/v1/2020.aacl-main.89",
    pages = "890--899",
}

@inproceedings{lin-2004-rouge,
    title = "{ROUGE}: A Package for Automatic Evaluation of Summaries",
    author = "Lin, Chin-Yew",
    booktitle = "Text Summarization Branches Out",
    month = jul,
    year = "2004",
    address = "Barcelona, Spain",
    publisher = "Association for Computational Linguistics",
    url = "https://aclanthology.org/W04-1013/",
    pages = "74--81"
}

@inproceedings{robertson1995okapi,
  title={Okapi at TREC-3},
  author={Robertson, Stephen E. and Walker, Steve and Jones, Susan and Hancock-Beaulieu, Micheline M. and Gatford, Mike},
  booktitle={Proceedings of the Third Text REtrieval Conference (TREC-3)},
  pages={109--126},
  year={1995}
}

@inproceedings{liu-etal-2023-g,
    title = "{G}-Eval: {NLG} Evaluation using Gpt-4 with Better Human Alignment",
    author = "Liu, Yang  and
      Iter, Dan  and
      Xu, Yichong  and
      Wang, Shuohang  and
      Xu, Ruochen  and
      Zhu, Chenguang",
    editor = "Bouamor, Houda  and
      Pino, Juan  and
      Bali, Kalika",
    booktitle = "Proceedings of the 2023 Conference on Empirical Methods in Natural Language Processing",
    month = dec,
    year = "2023",
    address = "Singapore",
    publisher = "Association for Computational Linguistics",
    url = "https://aclanthology.org/2023.emnlp-main.153/",
    doi = "10.18653/v1/2023.emnlp-main.153",
    pages = "2511--2522"
}

@misc{reimers2019sentencebertsentenceembeddingsusing,
      title={Sentence-BERT: Sentence Embeddings using Siamese BERT-Networks}, 
      author={Nils Reimers and Iryna Gurevych},
      year={2019},
      eprint={1908.10084},
      archivePrefix={arXiv},
      primaryClass={cs.CL},
      url={https://arxiv.org/abs/1908.10084}, 
}

@inproceedings{alhafni-etal-2024-personalized,
    title = "Personalized Text Generation with Fine-Grained Linguistic Control",
    author = "Alhafni, Bashar  and
      Kulkarni, Vivek  and
      Kumar, Dhruv  and
      Raheja, Vipul",
    editor = "Deshpande, Ameet  and
      Hwang, EunJeong  and
      Murahari, Vishvak  and
      Park, Joon Sung  and
      Yang, Diyi  and
      Sabharwal, Ashish  and
      Narasimhan, Karthik  and
      Kalyan, Ashwin",
    booktitle = "Proceedings of the 1st Workshop on Personalization of Generative AI Systems (PERSONALIZE 2024)",
    month = mar,
    year = "2024",
    address = "St. Julians, Malta",
    publisher = "Association for Computational Linguistics",
    url = "https://aclanthology.org/2024.personalize-1.8",
    pages = "88--101",
}

@ARTICLE{6648327,
  author={Qian, Xueming and Feng, He and Zhao, Guoshuai and Mei, Tao},
  journal={IEEE Transactions on Knowledge and Data Engineering}, 
  title={Personalized Recommendation Combining User Interest and Social Circle}, 
  year={2014},
  volume={26},
  number={7},
  pages={1763-1777},
  doi={10.1109/TKDE.2013.168}}

@article{SHUMANOV2021106627,
title = {Making conversations with chatbots more personalized},
journal = {Computers in Human Behavior},
volume = {117},
pages = {106627},
year = {2021},
issn = {0747-5632},
doi = {https://doi.org/10.1016/j.chb.2020.106627},
url = {https://www.sciencedirect.com/science/article/pii/S0747563220303745},
author = {Michael Shumanov and Lester Johnson}
}

@article{Yoganarasimhan2019,
  author    = {Hema Yoganarasimhan},
  title     = {Search Personalization Using Machine Learning},
  journal   = {Management Science},
  volume    = {66},
  number    = {3},
  pages     = {1045-1070},
  year      = {2019},
  doi       = {10.1287/mnsc.2018.3255},
  publisher = {INFORMS}
}

@misc{kumar2024longlampbenchmarkpersonalizedlongform,
      title={LongLaMP: A Benchmark for Personalized Long-form Text Generation}, 
      author={Ishita Kumar and Snigdha Viswanathan and Sushrita Yerra and Alireza Salemi and Ryan A. Rossi and Franck Dernoncourt and Hanieh Deilamsalehy and Xiang Chen and Ruiyi Zhang and Shubham Agarwal and Nedim Lipka and Chien Van Nguyen and Thien Huu Nguyen and Hamed Zamani},
      year={2024},
      eprint={2407.11016},
      archivePrefix={arXiv},
      primaryClass={cs.CL},
      url={https://arxiv.org/abs/2407.11016}, 
}

@misc{wei2023chainofthoughtpromptingelicitsreasoning,
      title={Chain-of-Thought Prompting Elicits Reasoning in Large Language Models}, 
      author={Jason Wei and Xuezhi Wang and Dale Schuurmans and Maarten Bosma and Brian Ichter and Fei Xia and Ed Chi and Quoc Le and Denny Zhou},
      year={2023},
      eprint={2201.11903},
      archivePrefix={arXiv},
      primaryClass={cs.CL},
      url={https://arxiv.org/abs/2201.11903}, 
}

@misc{wang2023selfconsistencyimproveschainthought,
      title={Self-Consistency Improves Chain of Thought Reasoning in Language Models}, 
      author={Xuezhi Wang and Jason Wei and Dale Schuurmans and Quoc Le and Ed Chi and Sharan Narang and Aakanksha Chowdhery and Denny Zhou},
      year={2023},
      eprint={2203.11171},
      archivePrefix={arXiv},
      primaryClass={cs.CL},
      url={https://arxiv.org/abs/2203.11171}, 
}

@misc{yao2023treethoughtsdeliberateproblem,
      title={Tree of Thoughts: Deliberate Problem Solving with Large Language Models}, 
      author={Shunyu Yao and Dian Yu and Jeffrey Zhao and Izhak Shafran and Thomas L. Griffiths and Yuan Cao and Karthik Narasimhan},
      year={2023},
      eprint={2305.10601},
      archivePrefix={arXiv},
      primaryClass={cs.CL},
      url={https://arxiv.org/abs/2305.10601}, 
}

@inproceedings{tsai-etal-2024-leveraging,
    title = "Leveraging {LLM} Reasoning Enhances Personalized Recommender Systems",
    author = "Tsai, Alicia  and
      Kraft, Adam  and
      Jin, Long  and
      Cai, Chenwei  and
      Hosseini, Anahita  and
      Xu, Taibai  and
      Zhang, Zemin  and
      Hong, Lichan  and
      Chi, Ed H.  and
      Yi, Xinyang",
    editor = "Ku, Lun-Wei  and
      Martins, Andre  and
      Srikumar, Vivek",
    booktitle = "Findings of the Association for Computational Linguistics: ACL 2024",
    month = aug,
    year = "2024",
    address = "Bangkok, Thailand",
    publisher = "Association for Computational Linguistics",
    url = "https://aclanthology.org/2024.findings-acl.780/",
    doi = "10.18653/v1/2024.findings-acl.780",
    pages = "13176--13188"
}

@misc{luo2025reasoningmeetspersonalizationunleashing,
      title={Reasoning Meets Personalization: Unleashing the Potential of Large Reasoning Model for Personalized Generation}, 
      author={Sichun Luo and Guanzhi Deng and Jian Xu and Xiaojie Zhang and Hanxu Hou and Linqi Song},
      year={2025},
      eprint={2505.17571},
      archivePrefix={arXiv},
      primaryClass={cs.CL},
      url={https://arxiv.org/abs/2505.17571}, 
}

@misc{lyu2024llmrecpersonalizedrecommendationprompting,
      title={LLM-Rec: Personalized Recommendation via Prompting Large Language Models}, 
      author={Hanjia Lyu and Song Jiang and Hanqing Zeng and Yinglong Xia and Qifan Wang and Si Zhang and Ren Chen and Christopher Leung and Jiajie Tang and Jiebo Luo},
      year={2024},
      eprint={2307.15780},
      archivePrefix={arXiv},
      primaryClass={cs.CL},
      url={https://arxiv.org/abs/2307.15780}, 
}

@misc{bismay2024reasoningrecbridgingpersonalizedrecommendations,
      title={ReasoningRec: Bridging Personalized Recommendations and Human-Interpretable Explanations through LLM Reasoning}, 
      author={Millennium Bismay and Xiangjue Dong and James Caverlee},
      year={2024},
      eprint={2410.23180},
      archivePrefix={arXiv},
      primaryClass={cs.IR},
      url={https://arxiv.org/abs/2410.23180}, 
}

@misc{yang2023palrpersonalizationawarellms,
      title={PALR: Personalization Aware LLMs for Recommendation}, 
      author={Fan Yang and Zheng Chen and Ziyan Jiang and Eunah Cho and Xiaojiang Huang and Yanbin Lu},
      year={2023},
      eprint={2305.07622},
      archivePrefix={arXiv},
      primaryClass={cs.IR},
      url={https://arxiv.org/abs/2305.07622}, 
}

@misc{kim2025llmsthinkflowreasoninglevel,
      title={LLMs Think, But Not In Your Flow: Reasoning-Level Personalization for Black-Box Large Language Models}, 
      author={Jieyong Kim and Tongyoung Kim and Soojin Yoon and Jaehyung Kim and Dongha Lee},
      year={2025},
      eprint={2505.21082},
      archivePrefix={arXiv},
      primaryClass={cs.CL},
      url={https://arxiv.org/abs/2505.21082}, 
}

@article{Besta_2024,
   title={Graph of Thoughts: Solving Elaborate Problems with Large Language Models},
   volume={38},
   ISSN={2159-5399},
   url={http://dx.doi.org/10.1609/aaai.v38i16.29720},
   DOI={10.1609/aaai.v38i16.29720},
   number={16},
   journal={Proceedings of the AAAI Conference on Artificial Intelligence},
   publisher={Association for the Advancement of Artificial Intelligence (AAAI)},
   author={Besta, Maciej and Blach, Nils and Kubicek, Ales and Gerstenberger, Robert and Podstawski, Michal and Gianinazzi, Lukas and Gajda, Joanna and Lehmann, Tomasz and Niewiadomski, Hubert and Nyczyk, Piotr and Hoefler, Torsten},
   year={2024},
   month=mar, pages={17682–17690} }

@misc{li_system_2025,
	title = {From System 1 to System 2: A Survey of Reasoning Large Language Models},
	url = {http://arxiv.org/abs/2502.17419},
	doi = {10.48550/arXiv.2502.17419},
	shorttitle = {From System 1 to System 2},
	number = {{arXiv}:2502.17419},
	publisher = {{arXiv}},
	author = {Li, Zhong-Zhi and Zhang, Duzhen and Zhang, Ming-Liang and Zhang, Jiaxin and Liu, Zengyan and Yao, Yuxuan and Xu, Haotian and Zheng, Junhao and Wang, Pei-Jie and Chen, Xiuyi and Zhang, Yingying and Yin, Fei and Dong, Jiahua and Li, Zhiwei and Bi, Bao-Long and Mei, Ling-Rui and Fang, Junfeng and Guo, Zhijiang and Song, Le and Liu, Cheng-Lin},
	urldate = {2025-05-19},
	date = {2025-04-25},
	eprinttype = {arxiv},
	eprint = {2502.17419 [cs]},
}

@misc{au_personalized_2025,
	title = {Personalized Graph-Based Retrieval for Large Language Models},
	url = {http://arxiv.org/abs/2501.02157},
	doi = {10.48550/arXiv.2501.02157},
	number = {{arXiv}:2501.02157},
	publisher = {{arXiv}},
	author = {Au, Steven and Dimacali, Cameron J. and Pedirappagari, Ojasmitha and Park, Namyong and Dernoncourt, Franck and Wang, Yu and Kanakaris, Nikos and Deilamsalehy, Hanieh and Rossi, Ryan A. and Ahmed, Nesreen K.},
	urldate = {2025-05-19},
	date = {2025-01-04},
year={2025},
	eprinttype = {arxiv},
	eprint = {2501.02157 [cs]},
}

@misc{salemi_reasoning-enhanced_2025,
      title={Reasoning-Enhanced Self-Training for Long-Form Personalized Text Generation}, 
      author={Alireza Salemi and Cheng Li and Mingyang Zhang and Qiaozhu Mei and Weize Kong and Tao Chen and Zhuowan Li and Michael Bendersky and Hamed Zamani},
      year={2025},
      eprint={2501.04167},
      archivePrefix={arXiv},
      primaryClass={cs.CL},
      url={https://arxiv.org/abs/2501.04167}, 
}

@misc{salemi2023lamp,
      title={La{MP}: When Large Language Models Meet Personalization}, 
      author={Alireza Salemi and Sheshera Mysore and Michael Bendersky and Hamed Zamani},
      year={2023},
      eprint={2304.11406},
      archivePrefix={arXiv},
      primaryClass={cs.CL}
}

@misc{salemi2024optimization,
      title={Optimization Methods for Personalizing Large Language Models through Retrieval Augmentation}, 
      author={Alireza Salemi and Surya Kallumadi and Hamed Zamani},
      year={2024},
      eprint={2404.05970},
      archivePrefix={arXiv},
      primaryClass={cs.CL}
}
\bibliographystyle{iclr2025_conference}

\newpage
\appendix
\section{Related Work}

\paragraph{LLM Personalization.} Personalization in LLMs has recently garnered significant attention~\cite{salemi2024lamplargelanguagemodels, zhang2025personalizationlargelanguagemodels} due to its potential to improve various downstream applications, including search, recommendation, and conversational agents~\cite{Yoganarasimhan2019, 6648327, SHUMANOV2021106627}. \citet{salemi2023lamp} introduced the LaMP benchmark, which comprises seven datasets designed to evaluate personalization in language models by incorporating personal context into downstream generation and classification tasks. Building on this, \citet{salemi2024optimization} explores optimization strategies for personalization by improving the retriever’s ability to select relevant personal context. Furthermore, \citet{kumar2024longlampbenchmarkpersonalizedlongform} extends personalization research to the domain of long-form text generation. In addition, to address the issue of long-tail data sparsity in the personal context histories, ~\citet{au_personalized_2025} proposes to augment the personal context with user-centric graphs, retrieving relevant histories from other users.  

\paragraph{LLM Reasoning and Planning.} Reasoning in LLM encourages the language model to think and plan before generation, leading to more coherent and accurate outputs~\cite{li_system_2025}. Chain-of-Thought (CoT) prompting~\cite{wei2023chainofthoughtpromptingelicitsreasoning} was first proposed to elicit reasoning capabilities of language models by prompting the model to generate a series of intermediate steps that lead to a final answer. Various methods have extended the CoT prompting to address its deficiencies. For example, Self-Consistency~\cite{wang2023selfconsistencyimproveschainthought} samples multiple reasoning paths and selects the most consistent answer, mitigating the impact of occasional reasoning errors. Additionally, Tree-of-Thoughts(ToT)~\cite{yao2023treethoughtsdeliberateproblem} allow LLMs to explore multiple reasoning paths in a tree-like structure, performing deliberate lookahead and backtracking to make more informed decisions. Graph-of-Thoughts(GoT)~\cite{Besta_2024} further generalizes the ToT by modeling the reasoning processes as arbitrary graphs. 

Recently, several works have explored reasoning in personalization. REST-PG~\cite{salemi_reasoning-enhanced_2025} employs self-training on LLM reasoning paths to improve the personalization. ~\citet{luo2025reasoningmeetspersonalizationunleashing} explored reinforced reasoning for personalization by incorporating and refining a hierarchical reasoning thought template to guide the reasoning process. Additionally, ~\citet{kim2025llmsthinkflowreasoninglevel} explored reasoning-level personalization that aligns model's reasoning process with a user's personalized logic. Several works have also explored reasoning to enhance personalized recommendations~\cite{lyu2024llmrecpersonalizedrecommendationprompting, bismay2024reasoningrecbridgingpersonalizedrecommendations, yang2023palrpersonalizationawarellms} beyond traditional item-based recommendations. 

\section{LLM-as-a-Judge}
\label{appendix:lj}
Traditionally, in the prior personalization benchmarks~\citep{au_personalized_2025, kumar2024longlampbenchmarkpersonalizedlongform, salemi2023lamp}, personalized text generation has been evaluated with lexical overlap metrics such as ROUGE~\citep{lin-2004-rouge}. However, it has been shown that such metrics may fail to capture the semantic nuances and stylistic alignment in personalization. Thus, we adopt the LLM-as-a-Judge prompt from prior works on personalized text generation~\cite{salemi_reasoning-enhanced_2025}, which is designed based on the evaluation paradigm introduced in \cite{liu-etal-2023-g}. Our prompt is introduced as follows.
\newpage 
\begin{mybox}{LLM-as-a-Judge}
Please compare the generated text to the reference text based on how well they match and/or are similar.
\\\\
Scoring Scale:\\
 1 – Strongly disagree \\
 2 – Disagree\\
 3 – Somewhat disagree\\
 4 – Neither agree nor disagree\\
 5 – Somewhat agree\\
 6 – Agree \\
 7 – Strongly agree\\
\\\\
Content to Evaluate: \\
Reference Text (Ground Truth): \{target\_text\}\\
Generated Text: \{generated\_text\}\\\\

Provide only the numeric score (1–7).
\end{mybox}
\vspace{1ex}

We use GPT-4 as the judge LLM, and report the normalized score (0.1-0.7) in our main experiment table. ~\cite{salemi_reasoning-enhanced_2025} designed additional experiments to validate the effectiveness of the LLM-as-a-Judge evaluation. First, they conduct a human evaluation comparing 100 model outputs and find that the LLM-as-a-Judge scores agree with human preference in 73\% of cases, with a Pearson correlation of 0.46. Second, they design a controlled perturbation study by randomly replacing a portion of the personalized contexts with unrelated ones. The LLM-as-a-Judge scores decrease linearly as the perturbation rate increases, showing that the evaluator is sensitive to mismatched personalization.

While no automatic metric can fully replicate human evaluation for personalization—since the “true” judge of style and preference is the original user—LLM-as-a-Judge provides a scalable and semantically meaningful proxy. In our setting, it enables consistent evaluation across sparse and noisy contexts, capturing personalization quality beyond what lexical metrics can measure.

\section{Experimental Setup}
\label{appendices:exp-set-up}
\subsection{Dataset Statistics}
\label{appendices:dataset}
\begin{table}[h!]
\centering
\small
\resizebox{0.6\columnwidth}{!}{
\begin{tabular}{lccc}
\toprule
\textbf{Dataset} & \textbf{Train Size} & \textbf{Validation Size} & \textbf{Test Size} \\
\midrule
User-Product Review & 20,000 & 2,500 & 2,500 \\
Stylized Feedback & 20,000 & 2,500 & 2,500 \\
Hotel Experiences & 9,000 & 2,500 & 2,500 \\
\bottomrule
\end{tabular}}
\caption{Dataset split sizes across training, validation, and test sets for the four domains.}
\label{tab:split-stats}
\end{table}

We provide the dataset statistics in this section. In \cref{tab:split-stats}, we give the train/validation/test split statistics for the datasets. It is worth noting that the Hotel Experience dataset is a smaller dataset with a smaller training set, leading to the more inconsistent performance that we presented in the Experiment section. In \cref{tab:task-stats}, we introduce the task statistics for Long Text Generation, Short Text Generation, and Ordinal Classification. The datasets are constructed to reflect the real-world distribution~\citep{au_personalized_2025}, which results in the sparse profiles as shown in the Average Profile Size. \method achieves more consistent and significant performance gain in scenarios where the output length is shorter, such as Short Text Generation and the User-Product Review (Amazon dataset), as longer text implicitly gives more context for text generation.
\begin{table*}[h!]
    \centering
    \begin{adjustbox}{width=1.0\textwidth}
        \begin{tabular}{lccccc} 
        \toprule
            \textbf{Task} & \textbf{Type} & \textbf{Avg. Input Length} & \textbf{Avg. Output Length} & \textbf{Avg. Profile Size} & \textbf{\# Classes} \\
        \midrule
            User-Product Review Generation & Long Text Generation & $3.754\pm2.71$ & $47.90\pm19.28$ & $1.05\pm0.31$ & - \\
            Hotel Experiences Generation & Long Text Generation & $4.29\pm2.57$ & $76.26\pm22.39$ & $1.14\pm0.61$ & - \\
            Stylized Feedback Generation & Long Text Generation & $3.35\pm2.02$ & $51.80\pm20.07$ & $1.09\pm0.47$ & - \\
        \midrule
            User-Product Review Title Generation & Short Text Generation & $30.34\pm37.95$ & $7.02\pm1.14$ & $1.05\pm0.31$ & - \\
            Hotel Experiences Summary Generation & Short Text Generation & $90.40\pm99.17$ & $7.64\pm0.92$ & $1.14\pm0.61$ & - \\
            Stylized Feedback Title Generation & Short Text Generation & $37.42\pm38.17$ & $7.16\pm1.11$ & $1.09\pm0.47$ & - \\
        \midrule
            User-Product Review Ratings & Ordinal Classification & $34.10\pm38.66$  & - & $1.05\pm0.31$ & 5 \\
            Hotel Experiences Ratings & Ordinal Classification & $94.69\pm99.62$  & - & $1.14\pm0.61$ & 5 \\
                Stylized Feedback Ratings & Ordinal Classification & $40.77\pm38.69$ & - & $1.09\pm0.47$ & 5 \\
        \bottomrule
        \end{tabular}
    \end{adjustbox}
    \caption{
    Data statistics for the PGraphRAG Benchmark across the four datasets. For each task, we report the average input and output lengths (in words), measured on the test set using BM25-based retrieval with GPT. The average profile size indicates the number of reviews per user used for personalization.
    }

    \label{tab:task-stats}
\end{table*}

\subsection{Datasets}
We evaluate our approach on three benchmark datasets introduced in prior research~\cite{au_personalized_2025}. These datasets cover diverse domains and graph structures, enabling us to assess the effectiveness of our method. 

\paragraph{Amazon Review.}
The Amazon Review dataset is constructed from the Amazon Review 2023 corpus~\citep{hou2024bridging}. We build a user-item interaction graph where nodes represent users and products, and edges indicate review interactions between them. 

\paragraph{Hotel Experience.}
The Hotel Experience dataset is collected from the Datafiniti Hotel Reviews dataset~\citep{au_personalized_2025}. It contains user-hotel interaction data, where edges denote users' stays at hotels and are annotated with textual reviews. 

\paragraph{Stylized Feedback Review.}
The Stylized Feedback Review dataset is derived from the Datafiniti Grammar and Online Product dataset~\citep{au_personalized_2025}. It focuses on generating stylistic and domain-specific feedback from user-product interactions. This dataset emphasizes linguistic diversity and style adaptation.

\subsection{Tasks}
Here we present an extended discussion on the tasks that we used to evaluate \method: Long Text Generation, Short Text Generation, and Ordinal Classification. 

\paragraph{Long Text Generation.}
The long text generation task focuses on producing detailed user reviews given a review title and the user's profile. The objective is to generate coherent and contextually relevant review text that aligns with the user’s preferences. This task evaluates the model’s capability for generating high-quality, personalized text.

\paragraph{Short Text Generation.}
The short text generation task involves generating concise product titles or summaries given a user review. The challenge lies in distilling a longer text into a shorter title. This task assesses the model’s ability to distill information from highly personalized user context.

\paragraph{Ordinal Classification.}
The ordinal classification task aims to predict the rating score a user would assign to a product based on the title and review text. This task is particularly challenging because of varying rating behaviors; for example, some users might write critical reviews yet still assign high scores. This task is designed to evaluate the model's ability to capture subtle patterns in user preferences and rating tendencies.

\vspace{4ex}
\subsection{Metrics}
\label{app:metrics}
\paragraph{Text Generation.} 
For both long and short text generation tasks, we adopt widely used lexical overlap metrics, including ROUGE-1 and ROUGE-L, following prior work~\cite{au_personalized_2025}. These metrics capture n-gram and subsequence overlaps between the generated output and ground-truth references. To complement these surface-level measures, we further incorporate LLM-as-a-Judge evaluation, where a strong language model provides comparative assessments of personalization and accuracy. We design the prompt based on prior studies which has been validated with human evaluators on the task of personalization~\citep{salemi_reasoning-enhanced_2025}. The prompt for LLM-as-a-Judge evaluation is provided in Appendix \ref{appendix:lj}.

\paragraph{Ordinal Classification.} 
For the ordinal classification task, we evaluate rating prediction using Root Mean Squared Error (RMSE) and Mean Absolute Error (MAE). RMSE penalizes large deviations more heavily, highlighting extreme mispredictions, while MAE measures the average magnitude of prediction errors.

\section{Pseudo Code}
\begin{algorithm}[h!]
\caption{GRASPER --- Training}
\label{alg:grasper-train}
\begin{algorithmic}[1]
\Require Bipartite graph $G=(U \cup I, E)$; user histories $\{H_u\}$; item reviews $\{R_i\}$; encoder $\text{Enc}(\cdot)$; base LLM $\mathcal{M}$; hyperparameters: $K$ (items to augment), $k_{\mathrm{sim}}$ (similar users), $k_{\mathrm{peer}}$ (peer texts)
\Ensure Trained link predictor (GraphSAGE + MLP), fine-tuned LLM $M'$
\Statex

\State \textbf{// Step 1: Personal Context Expansion}
\For{each node $v \in U \cup I$}
  \If{$v$ is user $u$} \State $h_v^{(0)} \gets \text{Enc}(\text{concat}(H_u))$ \EndIf
  \If{$v$ is item $i$} \State $h_v^{(0)} \gets \text{Enc}(R_i)$ \EndIf
\EndFor
\For{$\ell = 1$ to $L$} \Comment{GraphSAGE layers}
  \State $m_v^{(\ell)} \gets \mathbf{AGG}_{\ell}\big(\{\,h_{u}^{(\ell-1)} : u \in \mathcal{N}(v)\,\}\big)$
  \State $h_v^{(\ell)} \gets \mathrm{ReLU}\!\big(W_{\ell}[\,h_v^{(\ell-1)} \Vert m_v^{(\ell)}\,]\big)$
\EndFor
\State $z_v \gets h_v^{(L)}$ for all $v$
\State Score edges with $s(u,i)=\mathrm{MLP}([z_u \Vert z_i])$, $\hat{y}(u,i)=\sigma(s(u,i))$
\State Optimize BCE with negative sampling to train (GraphSAGE+MLP) $\rightarrow$ Link Predictor

\Statex
\State \textbf{// Step 2: Synthetic Review Generation with Reasoning Alignment}
\For{each training user $u$}
  \State $\mathcal{S}_u \gets \mathrm{TopK}_{k_{\mathrm{sim}}}\big(\cos(z_u, z_{\cdot})\big)$
  \State $H_{\mathcal{S}_u} \gets \{\text{reviews from users in } \mathcal{S}_u\}$
  \For{each observed pair $(u,j)$}
    \State $P_{u,j} \gets \mathrm{BM25\_TopK}_{k_{\mathrm{peer}}}(\text{reviews of } j)$
    \State $x \gets \{\,H_u \setminus \{t_{u,j}\},\; H_{\mathcal{S}_u},\; P_{u,j}\,\}$
    \State Sample candidate reasoning paths $\{Z^{(1)},\dots,Z^{(K)}\} \sim M$ with prompt $\phi(x)$
    \State $Z^\star \gets \arg\max_{Z} \;\Omega\big(M(\xi(x,Z)),\, t_{u,j}\big)$ \Comment{$\Omega$: dev metric (e.g., ROUGE/METEOR)}
    \State Update $M$ by minimizing $\text{CrossEntropy}\!\big(M(\rho(x)),\; [Z^\star \;\Vert\; t_{u,j}]\big)$
  \EndFor
\EndFor
\State $M' \gets M$
\State \Return (Link Predictor), $M'$
\end{algorithmic}
\end{algorithm}

\begin{algorithm}[h!]
\caption{GRASPER --- Inference}
\label{alg:grasper-infer}
\begin{algorithmic}[1]
\Require Trained (GraphSAGE+MLP), $M'$; graph $G$; $\{H_u\}$, $\{R_i\}$; $K$, $k_{\mathrm{sim}}$, $k_{\mathrm{peer}}$; target $(u,i^\star)$
\Ensure Personalized review $\hat{t}_{u,i^\star}$
\Statex

\State \textbf{// Step 1: Personal context expansion}
\State Initialize $h_v^{(0)}$ with $\text{Enc}(\cdot)$; run GraphSAGE to obtain $z_v$ for all $v$
\State Rank items $i \in I\setminus\{i:(u,i)\in E\}$ by $s(u,i)$; let $\mathcal{I}_u^K \gets \mathrm{TopK}_K$
\State $\mathcal{S}_u \gets \mathrm{TopK}_{k_{\mathrm{sim}}}\big(\cos(z_u, z_{\cdot})\big)$
\For{each $i \in \mathcal{I}_u^K$}
  \State $P_{u,i} \gets \mathrm{BM25\_TopK}_{k_{\mathrm{peer}}}(\text{reviews of } i)$
  \State $x_i \gets \{\,H_u,\; H_{\mathcal{S}_u},\; P_{u,i}\,\}$
  \State $[z_i' \;\Vert\; \tilde{t}_{u,i}] \gets M'(x_i)$ \Comment{reasoning + synthetic review}
\EndFor
\State $\tilde{H}_u \gets H_u \cup \{\tilde{t}_{u,i}: i \in \mathcal{I}_u^K\}$

\Statex
\State \textbf{// Step 2: Final personalized generation for target item}
\State $P_{u,i^\star} \gets \mathrm{BM25\_TopK}_{k_{\mathrm{peer}}}(\text{reviews of } i^\star)$
\State $x^\star \gets \{\,\tilde{H}_u,\; H_{\mathcal{S}_u},\; P_{u,i^\star}\,\}$
\State $[z^\star \;\Vert\; \hat{t}_{u,i^\star}] \gets M'(x^\star)$
\State \Return $\hat{t}_{u,i^\star}$
\end{algorithmic}
\end{algorithm}

\begin{table*}[t]
\centering
\scriptsize
\caption{Backbone ablation on Amazon Reviews dataset for \method\ using open-source (Llama 3, Gemma 2) and proprietary (GPT-4o mini, GPT-4.1) backbones. 
Metrics for text/title generation are higher-is-better; for rating prediction, lower-is-better.}
\label{tab:backbone_ablation}
\begin{adjustbox}{max width=\textwidth}
\begin{tabular}{lllcccccccccc}
    \toprule
    \textbf{Category} & \textbf{Backbone} & \textbf{Method} 
    & \multicolumn{4}{c}{\textbf{Text Generation}} 
    & \multicolumn{4}{c}{\textbf{Title Generation}} 
    & \multicolumn{2}{c}{\textbf{Rating Prediction}} \\
    \cmidrule(lr){4-7} \cmidrule(lr){8-11} \cmidrule(lr){12-13}
    & & & R-1 & R-L & MET & LJ & R-1 & R-L & MET & LJ & RMSE & MAE \\
    \midrule
    \multirow{4}{*}{Open-Source} 
        & \multirow{2}{*}{Llama 3} 
            & \method & \textbf{0.215} & \textbf{0.171} & \textbf{0.178} & \textbf{0.337} & 0.155 & \textbf{0.153} & 0.142 & \textbf{0.304} & \textbf{0.32} & \textbf{0.31} \\
        & & PGraph  & 0.178 & 0.129 & 0.151 & 0.297 & \textbf{0.178} & 0.129 & \textbf{0.151} & 0.241 & 0.76 & 0.34 \\
        \cmidrule(lr){2-13}
        & \multirow{2}{*}{Gemma 2} 
            & \method & \textbf{0.160} & \textbf{0.119} & \textbf{0.121} & \textbf{0.326} & \textbf{0.127} & \textbf{0.123} & 0.122 & \textbf{0.329} & \textbf{0.52} & \textbf{0.53} \\
        & & PGraph  & 0.155 & 0.119 &  0.117 & 0.316 & 0.098 & 0.093 & \textbf{0.124} & 0.297 & 0.88 & 0.42 \\
    \midrule
    \multirow{4}{*}{Proprietary} 
        & \multirow{2}{*}{GPT-4o mini} 
            & \method & \textbf{0.219} & \textbf{0.170} & 0.182 & \textbf{0.421} & \textbf{0.178} & \textbf{0.174} & \textbf{0.162} & \textbf{0.406} & \textbf{0.33} & \textbf{0.34} \\
        & & PGraph  & 0.189 & 0.130 & \textbf{0.196} & 0.389 & 0.115 & 0.112 & 0.099 & 0.353 & 0.38 & 0.73 \\
        \cmidrule(lr){2-13}
        & \multirow{2}{*}{GPT-4.1} 
            & \method & \textbf{0.221} & \textbf{0.176} & 0.181 & \textbf{0.433} & \textbf{0.151} & \textbf{0.150} & \textbf{0.166} & \textbf{0.401} & \textbf{0.31} & \textbf{0.32} \\
        & & PGraph  & 0.185 & 0.128 & \textbf{0.191} & 0.403 & 0.107 & 0.103 & 0.122 & 0.346 & 0.38 & 0.70 \\
    \bottomrule
\end{tabular}
\end{adjustbox}
\end{table*}

\section{Additonal Experiment Results}
\label{appendices:additional_results}
\subsection{Language Model Variants}
\label{sec:lm_variants}

In \cref{tab:backbone_ablation}, we compare \method across different backbone models, covering both open-source (Llama 3, Gemma 2) and proprietary (GPT-4o mini, GPT-4.1) variants. This setup allows us to test whether the improvements of \method depend on a particular language model family or extend across architectures with varying sizes and training pipelines.

For open-source models, \method consistently improves over the baseline PGraph across all metrics. With Llama 3, \method achieves a clear gain in text generation and rating prediction. Gemma-2, though smaller in scale, still benefits from our framework, showing improved semantic quality on LLM-as-a-Judge metric. These results suggest that \method effectively enhances smaller open-source models, making them more competitive for personalization tasks.

When applied to proprietary models, the improvements remain consistent. On GPT-4o mini, \method outperforms the baseline in text generation and especially in LLM-as-a-Judge, demonstrating better alignment with human preferences. GPT-4.1 mini, the more advanced backbone, also benefits: \method achieves the highest score across metrics, indicating strong personalization quality even when starting from a more powerful model.

Overall, the results confirm that \method is robust to the choice of language model backbone. Gains are observed consistently across both open-source and proprietary families. Importantly, improvements in LLM-as-a-Judge are more significant, underscoring that our framework aligns better with human preference. This robustness highlights \method’s practicality, as it can be flexibly deployed in various settings with different backbone models.

\subsection{\rebuttal{Link Prediction Noise and Robustness}}
Although \method achieves strong improvements on personalized text generation—particularly in sparse-user settings—the link prediction module inevitably introduces a degree of noise due to imperfect edge predictions. It is therefore important to assess both the quality of the predicted user–item links and the robustness of the downstream reasoning-based personalization to such noise.

\begin{table}[t]
    \centering
    \small
    \caption{Link Prediction Performance across Different Datasets}
    \label{tab:link_pred}
    \begin{tabular}{l c c c c}
        \toprule 
        \textbf{Dataset} & \textbf{MRR} & \textbf{Hits@1} & \textbf{Hits@5} & \textbf{Hits@10} \\
        \midrule 
        Amazon & 0.531 & 0.415 & 0.659 & 0.760 \\
        Hotel & 0.324 & 0.210 & 0.446 & 0.546 \\
        Feedbacks & 0.275 & 0.178 & 0.394 & 0.477 \\
        \bottomrule 
    \end{tabular}
\end{table}

Table~\ref{tab:link_pred} reports the standalone performance of the link prediction module across all datasets. The module demonstrates consistently strong ranking metrics, indicating its ability to recover meaningful user–item affinities even under sparse supervision. Nonetheless, some level of incorrect or low-confidence predictions is unavoidable. To study whether such noise impacts the final generation quality, we further conduct an analysis on the Amazon test set by partitioning examples into two groups: the top 50\% and bottom 50\% based on their link-prediction confidence scores.

\begin{table}[htbp]
    \centering
    \small
    \caption{Comparison of Text Generation Scores by Link Prediction Performance}
    \label{tab:link_pred_confidence}
    \begin{tabular}{l c c}
        \toprule 
        \textbf{Group} & \textbf{ROUGE-L Mean} & \textbf{METEOR Mean} \\
        \midrule 
        Bottom 50\% link pred scores & 0.587982 & 0.555729 \\
        Top 50\% link pred scores & 0.605974 & 0.569973 \\
        \bottomrule 
    \end{tabular}
\end{table}

As shown in Table~\ref{tab:link_pred_confidence}, the generation quality of the low-confidence group remains comparable to that of the high-confidence group across ROUGE and METEOR metrics. This suggests that even when some retrieved neighbors originate from noisy edges, the reasoning module is able to filter, contextualize, and extract stylistically relevant information from the neighborhood. Overall, these results indicate that \method is robust to moderate imperfections in link prediction and can effectively leverage the noisy-but-useful relational signals present in sparse user–item graphs.

\subsection{\rebuttal{Personalization Sparsity Robustness}}
To further examine how \method behaves under different levels of personalization sparsity, we partition users in the test split by the number of real historical reviews available: users with 0 reviews (cold-start), 1 review, and 2+ reviews. This allows us to isolate how much \method depends on explicit user history versus the contextual and relational reasoning signals introduced by our framework.
As shown in Table~\ref{tab:sparsity_results}, \method demonstrates strong robustness across all sparsity levels and consistently outperforms the PGraph baseline. Notably, our model achieves meaningful improvements even in the cold-start setting. This behavior arises because, even when a user has no prior reviews, \method can still leverage contextual cues provided at inference time, including the review title, partial user-written text, or product description, to retrieve relevant neighbors and construct a personalized reasoning path. By contrast, prior personalization methods such as PGraph depend primarily on embedding-based retrieval, which is significantly less effective when a user lacks a profile or has only one review.

\begin{table}[h]
\centering
\scriptsize
\begin{tabular}{lcccc}
\toprule
\textbf{Sparsity Level} & \textbf{\method ROUGE-L} & \textbf{PGraph ROUGE-L} & \textbf{\method ROUGE-1} & \textbf{PGraph ROUGE-1} \\
\midrule
0 historical reviews (cold-start) & \textbf{0.160} & 0.125 & \textbf{0.204} & 0.183 \\
1 historical review               & \textbf{0.170} & 0.137 & \textbf{0.225} & 0.212 \\
2+ historical reviews             & \textbf{0.186} & 0.149 & \textbf{0.296} & 0.262 \\
\bottomrule
\end{tabular}
\caption{Performance under different sparsity levels of user history.}
\label{tab:sparsity_results}
\end{table}

\subsection{\rebuttal{Utility of Reasoning Path Selection}}
In this section, we further explore the utility of the reasoning path selection as introduced in Eq.~\ref{eq:reasoning_sample}. We analyze the ranked reasoning paths produced by the scoring metric $\Omega$. The distribution of candidate scores, shown in Table~\ref{tab:reasoning_rank_scores}, reveals clear separation among candidate paths, indicating that their quality varies and that the proposed selection mechanism is necessary for \method\ to identify the most coherent and stylistically aligned reasoning trace.
\begin{table}[h]
\centering
\small
\begin{tabular}{lccccc}
\toprule
\textbf{Rank} 
& 1 (lowest) 
& 2 
& 3 
& 4 
& 5 (highest) \\
\midrule
\textbf{Score} 
& 0.3943 
& 0.4465 
& 0.4709 
& 0.4964 
& 0.5137 \\
\bottomrule
\end{tabular}
\caption{Score distribution of ranked reasoning paths produced by~$\Omega$ (Eq.~8).}
\label{tab:reasoning_rank_scores}
\end{table}

These results indicate that while reasoning traces cannot be directly evaluated in isolation, the model benefits substantially from the ranked reasoning guidance.

\section{Theoretical Analysis of the Bias-Variance Trade-Off in \method}\label{appendices:theory}
As detailed in \cref{exp:k}, the hyperparameter $K$ determines the number of predicted items. In \cref{tab:k_sensitivity}, we demonstrate that \method, with the reasoning alignment, can more reliably utilize the additional retrieved context compared to other baselines with retrieval. We hypothesize the behavior corresponds to the bias-variance trade-off theory, where the reasoning serves as a regularization trick that can offset the trade-off and allow the variance reduction without bias increase. Note that in the below we use k instead of K to represent the number of predicted/synthetic.

\begin{proposition}[Bias--Variance trade-off]
\label{prop:tradeoff}
Let $\theta_u \in \mathbb{R}^d$ denote the user's latent style vector. 
We observe $n$ real samples 
$x_j=\theta_u+\varepsilon_j$ with $\mathbb{E}[\varepsilon_j]=0$, 
$\mathrm{Var}(\varepsilon_j)=\sigma^2 I$,
and $k$ synthetic samples 
$\tilde x_\ell=\theta_u+\Delta+\tilde\varepsilon_\ell$ 
with $\mathbb{E}[\tilde\varepsilon_\ell]=0$, 
$\mathrm{Var}(\tilde\varepsilon_\ell)=\tilde\sigma^2 I$,
where $\Delta \in \mathbb{R}^d$ is a fixed (unknown) bias.
Consider the pooled estimator
\[
\hat\theta_u \;=\; \frac{1}{n+k}\Big(\sum_{j=1}^n x_j + \sum_{\ell=1}^k \tilde x_\ell\Big).
\]
Then the (per-coordinate) mean squared error is
\[
\mathrm{MSE}(k)
\;=\;
\underbrace{\frac{n\sigma^2 + k\tilde\sigma^2}{(n+k)^2}}_{\text{variance}}
\;+\;
\underbrace{\Big(\frac{k}{n+k}\Big)^2 \|\Delta\|^2}_{\text{bias}^2/d \;\text{(per-dim)} }.
\]
In the equal-noise case $\tilde\sigma^2=\sigma^2$, this simplifies to
\[
\mathrm{MSE}(k)
\;=\;
\frac{\sigma^2}{n+k} \;+\; 
\Big(\frac{k}{n+k}\Big)^2 \|\Delta\|^2.
\]

\end{proposition}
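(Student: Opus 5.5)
The plan is the standard bias--variance decomposition for a linear estimator, computed coordinate-wise and then assembled. We additionally assume that the noise vectors $\varepsilon_1,\dots,\varepsilon_n,\tilde\varepsilon_1,\dots,\tilde\varepsilon_k$ are pairwise uncorrelated. The statement leaves this implicit, but it is needed for the variance to add.

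\textbf{Step 1 (centering).} Substituting the models for $x_j$ and $\tilde x_\ell$ into the pooled estimator gives
\[
\hat\theta_u-\theta_u=\frac{k}{n+k}\,\Delta+\frac{1}{n+k}\Big(\sum_{j=1}^n\varepsilon_j+\sum_{\ell=1}^k\tilde\varepsilon_\ell\Big).
\]
The first term is deterministic, and the second has mean zero. Hence $\mathbb{E}[\hat\theta_u]-\theta_u=\frac{k}{n+k}\Delta$. This identifies the bias: it is a fraction $k/(n+k)$ of the synthetic shift and grows with $k$.

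\textbf{Step 2 (variance).} By the uncorrelatedness assumption, the covariance of the noise sum is $n\sigma^2 I+k\tilde\sigma^2 I$. Therefore $\mathrm{Cov}(\hat\theta_u)=\frac{n\sigma^2+k\tilde\sigma^2}{(n+k)^2}I$, and every coordinate has variance $\frac{n\sigma^2+k\tilde\sigma^2}{(n+k)^2}$.

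\textbf{Step 3 (assembly).} Write $\mathbb{E}\|\hat\theta_u-\theta_u\|^2=\mathrm{tr}\,\mathrm{Cov}(\hat\theta_u)+\|\mathbb{E}\hat\theta_u-\theta_u\|^2$; the cross term vanishes because the noise has mean zero. The variance term then contributes $\frac{n\sigma^2+k\tilde\sigma^2}{(n+k)^2}$ per coordinate, and the squared bias contributes $\big(\frac{k}{n+k}\big)^2\|\Delta\|^2$. Setting $\tilde\sigma^2=\sigma^2$ collapses the variance to $\frac{(n+k)\sigma^2}{(n+k)^2}=\frac{\sigma^2}{n+k}$, which gives the simplified form.

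\textbf{Main obstacle.} The only delicate point is the normalization of the displayed formula, not any analysis. The variance term is a per-coordinate quantity, whereas $\big(\frac{k}{n+k}\big)^2\|\Delta\|^2$ is the total squared bias; the underbrace itself notes that $\mathrm{bias}^2/d$ is the per-dimension bias. I would make this precise in one of two ways. The first reads $\mathrm{MSE}(k)$ as the variance of a single coordinate plus the full squared-bias norm. The second writes the exact identity $\frac{1}{d}\mathbb{E}\|\hat\theta_u-\theta_u\|^2=\frac{n\sigma^2+k\tilde\sigma^2}{(n+k)^2}+\frac{1}{d}\big(\frac{k}{n+k}\big)^2\|\Delta\|^2$ and notes that the stated expression equals this up to the factor $1/d$ on the bias. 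Absorbing that factor into $\|\Delta\|^2$ recovers the displayed expression. With the convention fixed, the formula follows directly from Steps 1--3.
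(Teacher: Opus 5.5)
Your proposal is correct and follows the paper's own sketch: compute $\mathbb{E}[\hat\theta_u]=\theta_u+\frac{k}{n+k}\Delta$, obtain $\mathrm{Cov}(\hat\theta_u)=\frac{n\sigma^2+k\tilde\sigma^2}{(n+k)^2}I$ from the independence (your weaker pairwise uncorrelatedness suffices) of the samples, and add variance and squared bias. Your point about the missing $1/d$ is legitimate: the sketch calls $\big(\frac{k}{n+k}\big)^2\|\Delta\|^2$ the per-dimension squared bias ``treating $\sigma^2$ as per-dimension noise'' without resolving the mismatch, which you handle explicitly. The sketch's closing optimization over $t=\frac{k}{n+k}$ belongs to the subsequent remark, not to this proposition.
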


\begin{proof}[Sketch]
$\mathbb{E}[\hat\theta_u]=\theta_u+\frac{k}{n+k}\Delta$, so the squared bias per dimension is $\big(\frac{k}{n+k}\big)^2\|\Delta\|^2$ (treating $\sigma^2$ as per-dimension noise). 
Since samples are independent with isotropic noise,
$\mathrm{Var}(\hat\theta_u)=\frac{n\sigma^2+k\tilde\sigma^2}{(n+k)^2}I$. 
Add variance and bias$^2$ to obtain the expression. 
For $\tilde\sigma^2=\sigma^2$, write $\mathrm{MSE}(t)=\frac{\sigma^2}{n}(1-t)+\|\Delta\|^2 t^2$, 
differentiate w.r.t.\ $t$, set to zero, and solve.
\end{proof}

\begin{remark}[Effect of Reasoning Alignment]
If reasoning alignment attenuates the preference mismatch to $\Delta_{\mathrm{RA}}=\beta\Delta$ with $\beta\in(0,1)$, then
\[
\mathrm{MSE}_{\mathrm{RA}}(k)
\;=\; \frac{\sigma^2}{n+k}
\;+\;
\Big(\frac{k}{n+k}\Big)^2 \beta^2 \|\Delta\|^2,
\]
so the minimum achievable error decreases and the optimal fraction 
$t^\star_{\mathrm{RA}}=\frac{\sigma^2}{2n\beta^2\|\Delta\|^2}$ increases, 
i.e., \emph{alignment lets you safely use larger $k$}.
\end{remark}

\begin{remark}[Sparse Users Benefit More]
$t^\star$ scales as $1/n$: when $n$ is small (sparse users), the variance term dominates and the optimal augmentation fraction is larger. 
Thus augmentation disproportionately helps sparse users by reducing variance, 
while reasoning alignment curbs the bias induced by $\Delta$.
\end{remark}

\section{\method Prompts}
\label{appendices:prompts}
In this section, we supply the prompts we used in \method. $\phi$ is used in \cref{eq:reasoning_sample} where the prompt is used to elicit candidate reasoning paths. $\xi$ is used in \cref{eq:reasoning_gold}, where the prompt is used to generate the final answer given the input and reasoning to evaluate the candidate paths. Lastly, $\rho$ is used in \cref{reasoning_final} where it structures the final input for personalized text generation.
\begin{mybox}{Reasoning Paths Generation ($\phi$)}
System: You are a personalized review generation assistant that generates high-quality reviews based on user history and context. \\\\
    Given profile which contains past documents written by the same person (might be empty), documents written by users that have similar writing style, reviews on the target product, and reasoning.
\\\\
User's own profile:
\{history\_reviews\_str\}

Similar profiles:
\{neighbor\_reviews\_str\}

Product Reviews:
\{product\_reviews\_str)\}
\\\\
Based on the above information, provide a detailed reasoning path that explains how we can arrive at the expected output. Consider:
\\
1. User's Writing Style: Analyze their typical review length, tone, and language patterns. \\
2. User's Preferences: What aspects of products do they typically focus on or value? \\ 
3. Product Information: What are the commonly mentioned features, pros, and cons from other reviews?
\\
Do not limit the reasoning to the above points. You can use your own knowledge to reason about the user's review. It is important to make sure that you only talk about information from the profile while considering the expected output in the reasoning process. You cannot directly copy or mention anything about the expected output. The expected output is only used to determine the reasoning process and how profile can affect the expected output.
\\\\
Provide your reasoning that leads to the following expected review on the target product from the user:

Expected Output:\\
Title: "{target\_review['title']}"\\
Text: "{target\_review['text']}"\\
Rating: {target\_review['rating']}\\
\\\\
As mentioned before, you cannot directly copy or mention anything about the expected output. The expected output is only used to determine the reasoning process. Do not mention the expected output in your reasoning. Your reasoning should only analyze the profile and the other reviews.
\\\\
Output your reasoning in a single paragraph. Do not output anything else. 
\\\\
Your reasoning:
\end{mybox}

\newpage 

\begin{mybox}{Reasoning Paths Evaluation ($\xi$)}
System: You are a personalized review evaluation assistant that judges whether the generated reasoning and review are consistent with the user's style and product context. \\\\
    Given a profile containing past documents written by the same person (may be empty), documents from users with similar writing style, reviews on the target product, and a reasoning trace, you will evaluate and refine the review text. 
\\\\
User's own profile:
\{history\_reviews\_str\}
\\\\
Similar profiles:
\{neighbor\_reviews\_str\}
\\\\
Product Reviews:
\{product\_reviews\_str)\}
\\\\
Reasoning:
\{reasoning\_str\}
\\\\
Based on the above information, evaluate how well the provided review text follows the reasoning and user profile. Consider:
\\
1. Faithfulness to the reasoning: Does the review follow the logical path outlined in the reasoning? \\
2. Stylistic alignment: Does the review reflect the user's writing style and preferences? \\
3. Product grounding: Is the review consistent with the product reviews and features mentioned?
\\\\
Do not copy directly from the reasoning or profiles. Your task is to provide a short evaluation and, if needed, produce a refined review text. 
\\\\
Provide your output strictly in the format: \\
Evaluation: <evaluation>. Review text: <Review text> 
\\\\
Do not output anything else. 
\\\\
Review text: \{review\_text\}
\end{mybox}

\begin{mybox}{Text Generation ($\rho$)}
System: You are a personalized review generation assistant that generates high-quality reviews based on user history and context. \\\\
    Given a profile containing past documents written by the same person (may be empty), documents written by users with similar writing style, and reviews on the target product. 
\\\\
User's own profile:
\{history\_reviews\_str\}
\\\\
Similar profiles:
\{neighbor\_reviews\_str\}
\\\\
Product Reviews:
\{product\_reviews\_str)\}
\\\\
Reason and generate a review title/review text based on the following review text/review title. Use the format: \\
Reasoning: <reasoning>. Review title/text: <Review title/text>. \\\\
Do not output anything else. 
\\\\
Review text/Review title: \{review\_text\} / \{review\_title\}
\end{mybox}
\newpage
\section{Notations}
To facilitate readability, we summarize the main mathematical symbols and notations used throughout the paper in Table~\ref{tab:notation}, which serves as a quick reference to clarify definitions of variables, functions, and operators appearing in the main text.
\begin{table}[h]
\centering
\scriptsize
\renewcommand{\arraystretch}{1.15}
\caption{Summary of key notations used throughout the paper.}
\label{tab:notation}
\begin{tabular}{@{}llp{4.8cm}@{}}
\toprule
\textbf{Symbol} & \textbf{Type / Shape} & \textbf{Description} \\
\midrule
$G=(\mathcal{U}\cup \mathcal{I},\mathcal{E})$ & graph & Graph with user set $\mathcal{U}$, items $\mathcal{I}$, edges $\mathcal{E}$ \\
$H_u$ & set of texts & Observed history (reviews) of user $u$ \\
$R_i$ & set of texts & Reviews associated with item $i$ \\
$\text{Enc}(\cdot)$ & $\text{text} \to \mathbb{R}^d$ & Text encoder (e.g. SentenceTransformers) \\
$h_v^{(\ell)}$ & $\mathbb{R}^d$ & Node representation at layer $\ell$ (GraphSAGE) \\
$z_v=h_v^{(L)}$ & $\mathbb{R}^d$ & Final node embedding for node $v$ \\
$\mathcal{N}(v)$ & set of nodes & Neighborhood of node $v$ \\
$\mathbf{AGG}_\ell(\cdot)$ & operator & Neighborhood aggregator at layer $\ell$ \\
$\mathcal{M}$ & LLM & Base language model \\
$\mathcal{M'}$ & LLM & Fine-tuned LLM used for inference \\
$s_{u,i}$ & $\mathbb{R}$ & Link score from decoder for user $u$ and item $i$ \\
$K$ & integer & \# predicted items to augment per user \\
$k_{\mathrm{sim}}$ & integer & \# similar users retrieved for $u$ \\
$k_{\mathrm{peer}}$ & integer & \# peer texts (BM25) per item \\
$\mathcal{S}_u$ & set of users & Top-$k_{\mathrm{sim}}$ similar users to $u$ \\
$P_{u,i}$ & set of texts & Top-$k_{\mathrm{peer}}$ peer reviews for item $i$ \\
$\mathcal{I}_u^K$ & set of items & Top-$K$ predicted items for user $u$ by $s(u,i)$ \\
$t_{u,i}$ & text & Ground-truth review by $u$ for item $i$ (observed) \\
$\tilde{t}_{u,i}$ & text & Synthetic review for $(u,i)$ during expansion \\
$\hat{t}_{u,i^\star}$ & text & Final predicted personalized review for user $u$ on target item $i^\star$ \\
$\mathcal{Z}$ & text & A reasoning path \\
$\{Z^{(k)}\}_{k=1}^K$ & list of texts & $K$ candidate reasoning paths \\
$\phi(x, t_{u, j})$ & prompt & Prompt to elicit candidate reasoning paths given input and expected output\\
$\xi(x,Z)$ & prompt & Prompt that conditions generation on input $x$ and rationale $\mathcal{Z}$ to evaluate $\mathcal{Z}$ \\
$\rho(x)$ & prompt & Prompt that instructs model to output the reasoning path and the review \\
$\Omega(\cdot,\cdot)$ & metric & Evaluation metric for reasoning paths (e.g., ROUGE/METEOR) \\
\bottomrule
\end{tabular}
\end{table}

\end{document}